\documentclass[12pt,reqno,a4paper]{article}
\usepackage{mathrsfs}
\usepackage{amsmath}
\usepackage{amsthm}
\usepackage{amsfonts}
\usepackage{amssymb}
\usepackage[dvips]{graphicx}
\usepackage[dvips]{color}
\usepackage{latexsym}
\usepackage{enumerate}
\usepackage{xspace}
\usepackage[linesnumbered,ruled,vlined]{algorithm2e}
\usepackage{caption}

\bibliographystyle{plain}

\voffset=-17mm \hoffset=-17mm \setlength{\textwidth}{163 true mm}
\setlength{\textheight}{239 true mm}
\setlength{\emergencystretch}{2em}

\DeclareMathOperator*{\argmin}{argmin}
\newtheorem{thm}{Theorem}
\newtheorem{theorem}[thm]{Theorem}

\newtheorem{proposition}[thm]{Proposition}

\SetKwInput{KwInput}{Input}                %

\SetKwInput{KwInitialization}{Initialization} 
\SetKwInput{KwOutput}{Output}              %
\SetKwInput{KwCondition}{Condition}

\SetKwRepeat{Do}{do}{while}%

\begin{document}
\title{Sparse Deep Learning Models with the $\ell_1$ Regularization}
\author{Lixin Shen\thanks{Department of Mathematics, Syracuse University, Syracuse, NY 13244, United States of America. E-mail address: {\it lshen03@syr.edu}. }, \quad Rui Wang\thanks{School of Mathematics, Jilin University, Changchun 130012, P. R. China. E-mail address: {\it rwang11@jlu.edu.cn}. },\quad  \ Yuesheng Xu\thanks{
Department of Mathematics and Statistics, Old Dominion University, Norfolk, VA 23529, USA. This author is also a Professor Emeritus of Mathematics, Syracuse University, Syracuse, NY 13244, USA. E-mail address: {\it y1xu@odu.edu.} All correspondence should be sent to this author.}\quad and \ Mingsong Yan \thanks{ Department of Mathematics and Statistics, Old Dominion University, Norfolk, VA 23529, USA.  E-mail address: {\it myan007@odu.edu}.} }

\date{}

\maketitle{}

\begin{abstract}
Sparse neural networks are highly desirable in deep learning in reducing its complexity. The goal of this paper is to study how choices of regularization parameters influence the sparsity level of learned neural networks. We first derive the $\ell_1$-norm sparsity-promoting deep learning models including single and multiple regularization parameters models, from a statistical viewpoint. We then characterize the sparsity level of a regularized neural network in terms of the choice of the regularization parameters. Based on the characterizations, we develop iterative algorithms for selecting regularization parameters so that the weight parameters of the resulting deep neural network enjoy prescribed sparsity levels. Numerical experiments are presented to demonstrate the effectiveness of the proposed algorithms in choosing desirable regularization parameters and obtaining corresponding neural networks having both of predetermined sparsity levels and satisfactory approximation accuracy. 
\end{abstract}

\section{Introduction}
The last decade has witnessed remarkable advancements of
deep learning. Mathematically, the success is due to the expressiveness of deep neural networks (DNNs). DNNs are compositions of multiple layers, each applying a linear transformation (comprising a weight matrix and a bias vector) followed by a nonlinear activation function. The richness of parameters empowers deep learning in addressing diverse practical challenges. However, this strength also introduces the risk of overfitting, particularly in scenarios with limited training data.  The classical $\ell_2$ regularization technique has proven effective in mitigating overfitting when training DNNs \cite{krogh1991simple, krizhevsky2012imagenet}. In addition to the overfitting challenge,  the substantial number of parameters in DNNs presents issues related to memory usage and computational burden. To address these issues simultaneously, recent research endeavors seek to develop more compact networks with significantly fewer parameters. The term a {\it sparse DNN} refers to a deep neural network in which a significant portion of their weight parameters are zero. It was emphasized in \cite{hoefler2021future} that the future of deep learning lies in sparsity. For a comprehensive review of recent sparsity techniques used in deep learning, readers are referred to \cite{hoefler2021sparsity}.  Sparse DNNs can save tremendous computing time compared to {\it dense} DNNs, when using trained predictors to make decisions. These techniques encompass model compression \cite{ba2014deep, hinton2015distilling}, pruning unnecessary connections \cite{blalock2020state, han2015learning}, and the addition of a sparse regularization term, a main focus of this paper. A mathematical definition on the sparsity promoting functions was discussed in \cite{Shen-Suter-Tripp:JOTA:2019}.

Various sparsity-promoting regularization techniques have been employed in deep learning to enhance sparsity of DNNs.
The $\ell_1$ and $\ell_0$ regularizations were integrated in \cite{collins2014memory} to promote sparsity of parameters of convolutional neural networks (CNNs). A variety of regularization terms, such as group lasso, %
exclusive group lasso, %
and their variations, %
were incorporated into the training of neural networks in \cite{alvarez2016learning, scardapane2017group, wen2016learning, yoon2017combined, zhou2016less} to promote structured sparsity in the resulting DNNs or CNNs. A non-convex transformed $\ell_1$ sparse regularization was introduced in 
\cite{ma2019transformed}, going beyond convex regularization terms, into deep learning. The $\ell_1$-norm regularization was used in \cite{xu2023sparse} to train DNNs for solving nonlinear partial differential equations, demonstrating good generalization results.
Numerical results showed in the above mentioned papers confirm empirically that regularization related to the $\ell_1$-norm can promote sparsity of a regularized DNN and at the same time preserve approximation accuracy. 
However, what absents is a strategy for the choice of the regularization parameter, which guarantees a predetermined sparsity level of the resulting DNN.

It is well-known \cite{tikhonov1977solutions} that an $\ell_2$-norm regularization, such as the Tikhonov regularization, can mitigate the ill-posedness of a system under consideration and suppress noise contaminated in given data, see also \cite{lu2013regularization}. An algorithm for solving optimization problems involving an $\ell_0$ sparse regularization was studied in \cite{fang2024}, which numerically presents the ability of the $\ell_0$ regularization both suppressing noise and promoting sparsity of solutions. Moreover, it was shown in \cite{grasmair2008sparse} that an $\ell_p$-norm regularization can be used to suppress noise. The technique developed in \cite{grasmair2008sparse} was employed in \cite{liu2023parameter} to craft a regularization parameter choice strategy for the $\ell_1$-norm regularization leading to a sparse solution having sparsity of a prescribed level and accuracy of certain order, which were proved rigorously and validated experimentally. It is the goal of this paper to devise a regularization parameter choice strategy for the $\ell_1$-norm regularization that allows to achieve sparsity of DNNs at a prescribed level. We first derive regularization models by employing a maximum a posteriori probability (MAP) estimate under prior assumptions on the weight matrices and bias vectors of the DNNs. Unlike the optimization problems investigated in \cite{liu2023parameter}, which are convex, those to be studied in this paper will involve DNNs, and thus, they are inherently highly non-convex. 

Developing a regularization parameter choice strategy for a highly non-convex fidelity term is a challenging task. To address this challenge, we utilize the generalized Fermat rule to characterize a local minimizer of a non-convex function. %
Note that a DNN has a multiple layers, with different layers representing different scales of a decision function to be learned. Therefore, it is not effective to impose a single regularization parameter for all weight matrices of different layers. It was demonstrated numerically in \cite{xu2023sparse} that imposing different regularization parameters for weight matrices in different layers leads to a more accurate learning solution than imposing a single parameter. Compared to single-parameter regularization, the multi-parameter $\ell_1$-norm regularization offers greater flexibility in promoting sparsity in DNNs and reduces the sensitivity of the model from the variation of the single regularization parameter.  This motivates us to study a parameter choice strategy for the multi-parameter $\ell_1$-norm regularization for learning a DNN. Specifically, we characterize how the regularization parameter in each layer influences the sparsity of the weight matrix of the layer.
Based on this characterization, we then propose an iterative strategy to obtain a DNN with weight parameters that achieve a desired sparsity level in each layer while maintaining comparable accuracy.

We organize this paper in seven sections. In Section \ref{section: sparse deep learning with l1 norm}, we derive the sparse deep learning with single regularization parameter from a statistical viewpoint. In Section \ref{section: parameter choice for single, sparsity and accuracy}, we characterize the relation between the regularization parameter and the sparsity of weight matrices in DNNs. We develop an iterative algorithm to select regularization parameter $\lambda$ such that the resulting neural network has a prescribed sparsity level. In Section \ref{section: multi parameter model}, we extend the results from a model with a single regularization parameter to one with multiple regularization parameters, and derive an iterative strategy to determine regularization parameters such that the weight matrices for each layer enjoy given sparsity levels. In Section \ref{ImplementationIssues}, an algorithm for solving non-convex optimization problems with the $\ell_1$-norm regularization is discussed. In Section \ref{section: numerical experiment}, we implement the iterative strategy introduced in Sections \ref{section: parameter choice for single, sparsity and accuracy} and  \ref{section: multi parameter model} for choosing regularization parameters and validate its effectiveness.  We make conclusions in Section \ref{section: conclusion}.

\section{Sparse Deep Learning with the $\ell_1$ Regularization}\label{section: sparse deep learning with l1 norm}
In this section, we derive sparse deep learning regularization models with a single $\ell_1$ regularization term from a statistical viewpoint. 

We begin by recalling the notation of DNNs. A fully connected feed-forward neural network (FNN) of depth $D\in\mathbb{N}$ comprises an input layer, $D-1$ hidden layers, and an output layer. An FNN with more than two hidden layers is typically referred to as a DNN. For $p,q\in\mathbb{N}$, a DNN is a vector-valued function from $\mathbb{R}^p$
to $\mathbb{R}^q$ 
formed by compositions of functions, each defined by an activation function applied component-wise to an affine map. Specifically, for each $n\in\mathbb{N}$, we define $\mathbb{N}_n:=\{1,2,\ldots,n\}$ with $\mathbb{N}_0:=\emptyset$. Let $\sigma: \mathbb{R}\to\mathbb{R}$ be a given activation function. We then  define a vector-valued function as follows: 
\begin{equation*}\label{activationF}
\sigma({x}):=[\sigma(x_j):j\in\mathbb{N}_n]^\top \ \ \mbox{for}\ \ {x}:=[x_j: j\in\mathbb{N}_n]^\top\in\mathbb{R}^n.
\end{equation*}
For $n$ vector-valued functions $f_j$, $j\in\mathbb{N}_n$, where the range of $f_j$ is contained in the domain of $f_{j+1}$, for $j\in\mathbb{N}_{n-1}$, we denote the consecutive composition of $f_j$, $j\in\mathbb{N}_n$, by
\begin{equation*}\label{consecutive_composition}
    \bigodot_{j=1}^n f_j:=f_n\circ f_{n-1}\circ\cdots\circ f_2\circ f_1,
\end{equation*}
whose domain is that of $f_1$.
For each $k\in\mathbb{N}_{D-1}$, we denote by $n_k$ the number of neurons of the $k$-th layer and set $n_0:=p$, $n_D:=q$. For each $k\in\mathbb{N}_D$, we denote by $W^k\in\mathbb{R}^{n_k\times n_{k-1}}$ the weight matrix and $b^k\in\mathbb{R}^{n_k}$ the bias vector of the $k$-th layer. We let $\Theta:=(W^k,b^k)_{k\in\mathbb{N}_D}$ be the collection of trainable parameters in DNNs. 
 For given parameters $\Theta:=(W^k,b^k)_{k\in\mathbb{N}_D}$, a DNN is a function defined by
\begin{equation}\label{DNN}
\mathcal{N}_\Theta(x):=\left(W^D\bigodot_{k=1}^{D-1} \sigma(W^k\cdot+b^k)+b^D\right)({x}),\ \ {x}\in\mathbb{R}^p.
\end{equation}
The DNN defined by \eqref{DNN} refers to the fully connected neural network, and it has been pointed out in \cite{xu2022convergence} that the convolutional neural network (CNN) is a special case of \eqref{DNN},  with the weight matrix of the convolutional layer being a Toeplitz matrix.  

For simplicity, for each $k\in\mathbb{N}_D$, we denote by $w^k$ the vector obtained from vectorizing the weight matrix $W^k$, and by $d_k:=n_kn_{k-1}$ the length of $w^k$. In this notation, the trainable parameters $\Theta$ can be represented as $(w^k,b^k)_{k\in\mathbb{N}_D}$. We further concatenate the weight parameters $w^k$, $k\in\mathbb{N}_D$ as $w:=[w^k:k\in\mathbb{N}_D]$ and the bias parameters $b^k$, $k\in\mathbb{N}_D$ as $b:=[b^k:k\in\mathbb{N}_D]$, and thereby we may rewrite $\Theta$ as $(w,b)$. We set $d_W:=\sum_{k\in\mathbb{N}_D}d_k$ and $d_b:=\sum_{k\in\mathbb{N}_D}n_k$, which represent the total number of entries of $w$ and $b$, respectively, and let $t:=d_W+d_b$ denote the total number of the trainable parameters in the DNN. 

When training a neural network from a given dataset, regularization becomes essential to suppress noise, as observed data are inevitably corrupted with noise. Here, we derive a single-parameter regularization model from a Bayesian viewpoint. A maximum a posteriori probability (MAP) estimate is a way to obtain an estimate of the unknown parameters by maximizing the posterior probability density function \cite{stuart2010inverse}. Suppose that a training dataset $\left\{\left(x^i,y^i\right)\in\mathbb{R}^p\times\mathbb{R}^q: i\in\mathbb{N}_N\right\}$ is given, where the observed labels $y^i,$ $i\in\mathbb{N}_N$ are corrupted with Gaussian noise. For each $i\in\mathbb{N}_N$, we assume the true label, unaffected by noise, of $x^i$ to be $\hat{y}^i\in\mathbb{R}^q$. We denote by $\mathrm{Gaussian} (\mu,v^2)$ the  Gaussian distribution with mean $\mu$ and variance $v^2$. The probability density function of a random variable $x$ following $\mathrm{Gaussian} (\mu,v^2)$ is given by 
\begin{equation}\label{pdf of Gaussian}
    p(x):=\frac{1}{\sqrt{2\pi}v}\exp\left({-\frac{(x-\mu)^2}{2v^2}}\right),\ x\in\mathbb{R}.
\end{equation}
We assume that noises $\epsilon_j^i$, $(i,j)\in\mathbb{N}_N \times \mathbb{N}_q$, in the given data follow independently $\mathrm{Gaussian}(0,v^2)$ with $v>0$, and for each $i\in\mathbb{N}_N$, we let
\begin{equation}\label{epsilon j tilde}
\widetilde{y}^i:=[\widetilde{y}^i_j:j\in\mathbb{N}_q]\in\mathbb{R}^q,    \ \ \mbox{with}\ \ \widetilde{y}^i_j:=\hat{y}_j^i+\epsilon_j^i.
\end{equation}
For each $i\in \mathbb{N}_N$, $\widetilde{y}^i$ is a random variable. We may approximate the data $\hat{y}_j^i$ by $\left(\mathcal{N}_{\Theta}(x^i)\right)_j$ through the model
\begin{equation}\label{epsilon j}
\widetilde{y}^i_j=\left(\mathcal{N}_{\Theta}(x^i)\right)_j+\epsilon_j^i, \ \ j\in\mathbb{N}_q.
\end{equation}
Here, the parameters in $\Theta$ are random variables following a prior distribution. %

For the purpose of obtaining sparse DNNs, we assume that the prior distribution that $\Theta$ follows 
the Laplace distribution. %
We will use $\mathrm{Laplace} (\mu,s)$ to denote the Laplace distribution with location parameter $\mu$ and scale parameter $s>0$. The probability density function of a random variable $x$ following $\mathrm{Laplace} (\mu,s)$ is given by 
\begin{equation}\label{pdf of Laplace}
    p(x)=\frac{1}{2s}\mathrm{exp}\left(-\frac{|x-\mu|}{s}\right), \ x\in\mathbb{R}.
\end{equation}
We impose a prior Laplace distribution with location parameter $\mu$ being $0$ on both the weight parameters $w^k$, $k\in\mathbb{N}_D$ and the bias parameters $b^k$, $k\in\mathbb{N}_D$ for believing that they could be sparse. 

We will review the MAP estimate from Bayesian statistics. Let $Y:=[y^i_j:i\in\mathbb{N}_N,j\in\mathbb{N}_q]\in\mathbb{R}^{q\times N}$ represent the labels from given data, and $\widetilde{Y}:=[\widetilde{y}^i_j:i\in\mathbb{N}_N,j\in\mathbb{N}_q]\in\mathbb{R}^{q\times N}$ denote the random variable. By $p(\Theta|\widetilde{Y}=Y)$ we denote the posterior probability, the probability that $\Theta$ occurs when the random variable $\widetilde{Y}$ takes the value $Y$. The MAP estimate $\Theta^*$ can be obtained by maximizing the posterior probability $p(\Theta|\widetilde{Y}=Y)$. That is, $\Theta^*$ is the solution of the optimization problem  
\begin{equation}\label{MAP estimate}
    \max\left\{p(\Theta|\widetilde{Y}=Y):\Theta:=(w,b)\in\mathbb{R}^t\right\}.
\end{equation}
The posterior probability $p(\Theta|\widetilde{Y}=Y)$ can be computed by using the Bayes theorem.
We denote by $p(\widetilde{Y}=Y)$ the probability of the random variable $\widetilde{Y}$ taking value of $Y$, by $p(\Theta)$ the prior distribution assumed on $\Theta$, and by $p(\widetilde{Y}=Y|\Theta)$ the conditional probability that $\widetilde{Y}=Y$ occurs given $\Theta$ is known. The Bayes theorem \cite{bayes1763lii} states that the posterior distribution $p(\Theta|\widetilde{Y}=Y)$ can be written as 
\begin{equation}\label{Bayes'}
p(\Theta|\widetilde{Y}=Y)=\frac{p(\widetilde{Y}=Y|\Theta)p(\Theta)}{p(\widetilde{Y}=Y)}.
\end{equation}
Substituting equation \eqref{Bayes'} into the object function of problem \eqref{MAP estimate} with noting that the term $p(\widetilde{Y}=Y)$ is a constant with respect to $\Theta$, we rewrite problem \eqref{MAP estimate} as 
\begin{equation*}\label{MAP estimate1}
    \max\left\{p(\widetilde{Y}=Y|\Theta)p(\Theta):\Theta:=(w,b)\in\mathbb{R}^t\right\},
\end{equation*} 
which is further equivalent to 
\begin{equation}\label{optimization problem two ln}
    \max\left\{\log(p(\widetilde{Y}=Y|\Theta))+\log(p(\Theta)):\Theta:=(w,b)\in\mathbb{R}^t, p(\Theta)>0\right\}.
\end{equation}
See, \cite{krol2012preconditioned} for more details. 

We now derive the single-parameter regularization model for regression through the MAP estimate. 

\begin{proposition}\label{prop: square loss single lambda with bias regularized}
Suppose that $\left\{\left(x^i,y^i\right)\in\mathbb{R}^p\times\mathbb{R}^q: i\in\mathbb{N}_N\right\}$ is a given dataset, and the labels $y^i$ are the observed values of the random variables $\tilde{y}^i$ defined by equation \eqref{epsilon j tilde}  with $\epsilon_j^i$, $i\in\mathbb{N}_N$, $j\in\mathbb{N}_q$, independently following $\mathrm{Gaussian} (0,v^2)$ with $v>0$ and DNNs $\mathcal{N}_{\Theta}$ satisfy equation \eqref{epsilon j} for parameters $\Theta:=(w,b)$ with $w\in\mathbb{R}^{d_W}$, $b\in\mathbb{R}^{d_b}$. If for each $k\in\mathbb{N}_D$, $w_j^k$, $j\in\mathbb{N}_{d_k}$ and $b_{j}^k$, $j\in\mathbb{N}_{n_k}$ independently follow $\mathrm{Laplace} (0,s)$ with $s>0$, then the MAP estimate $\Theta^*$ is a solution of the optimization problem
\begin{equation}\label{single lambda regression model with bias regularized} 
    \min
    \left\{\sum_{i\in\mathbb{N}_N}\left\|\mathcal{N}_{\Theta}(x^i)-y^i\right\|_2^2+ \lambda\left(\|w\|_1+\|b\|_1\right):\Theta:=(w,b)\in\mathbb{R}^t\right\}
\end{equation}
with $\lambda:=2v^2/s$.
\end{proposition}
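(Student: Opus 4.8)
The plan is to reduce the MAP problem to its equivalent log-form \eqref{optimization problem two ln} and then evaluate the two summands $\log(p(\widetilde{Y}=Y|\Theta))$ and $\log(p(\Theta))$ explicitly under the stated Gaussian and Laplace hypotheses. The positivity constraint $p(\Theta)>0$ in \eqref{optimization problem two ln} is automatically satisfied, since the Laplace density \eqref{pdf of Laplace} is strictly positive everywhere. For the likelihood, I would first observe that by \eqref{epsilon j}, conditioned on $\Theta$, the random variable $\widetilde{y}^i_j$ is $(\mathcal{N}_\Theta(x^i))_j$ shifted by the noise $\epsilon^i_j\sim\mathrm{Gaussian}(0,v^2)$, so it follows $\mathrm{Gaussian}((\mathcal{N}_\Theta(x^i))_j,v^2)$ with density given by \eqref{pdf of Gaussian}. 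The mutual independence of the $\epsilon^i_j$ over $(i,j)\in\mathbb{N}_N\times\mathbb{N}_q$ factors the joint conditional density $p(\widetilde{Y}=Y|\Theta)$ into the product of these one-dimensional densities evaluated at $y^i_j$; taking logarithms and using $\sum_{j\in\mathbb{N}_q}(y^i_j-(\mathcal{N}_\Theta(x^i))_j)^2=\|\mathcal{N}_\Theta(x^i)-y^i\|_2^2$ yields
\[
\log(p(\widetilde{Y}=Y|\Theta))=-\frac{1}{2v^2}\sum_{i\in\mathbb{N}_N}\|\mathcal{N}_\Theta(x^i)-y^i\|_2^2+C_1,
\]
where $C_1:=-Nq\log(\sqrt{2\pi}\,v)$ does not depend on $\Theta$.

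For the prior, I would exploit that all $d_W$ weight entries and $d_b$ bias entries independently follow $\mathrm{Laplace}(0,s)$, so $p(\Theta)$ factors into the product of the densities \eqref{pdf of Laplace} with $\mu=0$. Taking logarithms and collecting the absolute values into $\ell_1$-norms gives
\[
\log(p(\Theta))=-\frac{1}{s}\left(\|w\|_1+\|b\|_1\right)+C_2,
\]
with $C_2:=-t\log(2s)$ independent of $\Theta$ and $t=d_W+d_b$.

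Finally, I would substitute these two displays into \eqref{optimization problem two ln} and discard the additive constants $C_1$ and $C_2$, reducing the MAP problem to maximizing $-\tfrac{1}{2v^2}\sum_{i\in\mathbb{N}_N}\|\mathcal{N}_\Theta(x^i)-y^i\|_2^2-\tfrac{1}{s}(\|w\|_1+\|b\|_1)$. Multiplying by the positive constant $2v^2$ and negating converts this maximization into the minimization \eqref{single lambda regression model with bias regularized} with $\lambda=2v^2/s$. There is no genuine analytic obstacle; the argument is a direct computation. The only points requiring care are the clean separation of the $\Theta$-dependent terms from the additive constants, so that each passage between optimization problems is a true equivalence, and the justification that both factorizations are legitimate, which rests entirely on the assumed independence of the noise components and of the prior coordinates.
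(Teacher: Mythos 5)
Your proposal is correct and follows essentially the same route as the paper's proof: reduce the MAP estimate to the log-form problem \eqref{optimization problem two ln}, compute $\log(p(\widetilde{Y}=Y|\Theta))$ via the Gaussian likelihood factorization and $\log(p(\Theta))$ via the Laplace prior factorization, then substitute, drop the constants $C_1$, $C_2$, and rescale by $-2v^2$ to obtain \eqref{single lambda regression model with bias regularized} with $\lambda=2v^2/s$. Your additional remark that the constraint $p(\Theta)>0$ is automatic since the Laplace density is strictly positive is a small point of care the paper leaves implicit, but it does not change the argument.
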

\begin{proof}
Let $Y:=[y^i_j:i\in\mathbb{N}_N,j\in\mathbb{N}_q]\in\mathbb{R}^{q\times N}$ and consider the random variable $\widetilde{Y}:=[\widetilde{y}^i_j:i\in\mathbb{N}_N,j\in\mathbb{N}_q]\in\mathbb{R}^{q\times N}$. It suffices to show that problems \eqref{optimization problem two ln} and \eqref{single lambda regression model with bias regularized} are equivalent. To this end, we compute the two probabilities  $\log(p(\widetilde{Y}=Y|\Theta))$ and $\log(p(\Theta))$. 

We first compute  $\log(p(\widetilde{Y}=Y|\Theta))$. It follows from equation \eqref{epsilon j} with $\epsilon_j^i$, $i\in\mathbb{N}_N$, $j\in\mathbb{N}_q$, independently following $\mathrm{Gaussian}(0,v^2)$ with $v>0$ that $\widetilde{y}^i_j$, $i\in\mathbb{N}_N$, $j\in\mathbb{N}_q$, independently follow $\mathrm{Gaussian}(\left(\mathcal{N}_{\Theta}(x^i)\right)_j,v^2)$. Since random variables $\widetilde{y}^i_j$, $i\in\mathbb{N}_N$, $j\in\mathbb{N}_q$ are independent, we have that 
\begin{equation*}
p(\widetilde{Y}=Y|\Theta)=\prod_{i\in\mathbb{N}_N}\prod_{j\in\mathbb{N}_q} p(\widetilde{y}^i_j=y^i_j|\Theta),
\end{equation*}
which together with probability density function \eqref{pdf of Gaussian} of Gaussian distribution leads to 
\begin{equation*}
p(\widetilde{Y}=Y|\Theta)=\prod_{i\in\mathbb{N}_N}\prod_{j\in\mathbb{N}_q} \frac{1}{\sqrt{2 \pi}v} \exp \left(-\frac{\left(\left(\mathcal{N}_{\Theta}(x^i)\right)_j-y^i_j\right)^2}{2 v^2}\right).
\end{equation*}
Therefore, we obtain that 
\begin{align}
\log(p(\widetilde{Y}=Y|\Theta))%
&=-qN\log\left(\sqrt{2\pi}v\right)-\frac{1}{2 v^2}\sum_{i\in\mathbb{N}_N}\sum_{j\in\mathbb{N}_q} \left(\left(\mathcal{N}_{\Theta}(x^i)\right)_j-y^i_j\right)^2.\nonumber
\label{ln p(y|Theta)}
\end{align}
Let $C_1:=-qN\log\left(\sqrt{2\pi}v\right)$ and note that $C_1$ is a constant independent of $\Theta$. We then obtain that
\begin{equation}\label{ln p(y|Theta)}
\log(p(\widetilde{Y}=Y|\Theta))=C_1-\frac{1}{2v^2}\sum_{i\in\mathbb{N}_N}\left\|\mathcal{N}_{\Theta}(x^i)-y^i\right\|_2^2.
\end{equation}

We next compute $\log(p(\Theta))$. Since for each $k\in\mathbb{N}_D$, $w^k_j$, $j\in\mathbb{N}_{d_k}$ and $b_{j'}^k$, $j'\in\mathbb{N}_{n_k}$ are independent, by the definition of $\Theta$, we have that
\begin{equation}\label{w and b are independent}
p(\Theta)=\prod_{k\in\mathbb{N}_D}\left(\prod_{j'\in\mathbb{N}_{n_k}}p(b^k_{j'})\right)\left(\prod_{j\in\mathbb{N}_{d_k}} p(w^k_j)\right). 
\end{equation}
This combined with the assumption that for each $k\in\mathbb{N}_D$, $w^k_j$, $j\in\mathbb{N}_{d_k}$ and $b_{j'}^k$, $j'\in\mathbb{N}_{n_k}$ follow $\mathrm{Laplace} (0,s)$ with $s>0$, and the probability density function \eqref{pdf of Laplace} of the Laplace distribution yields that 
\begin{equation*}
p(\Theta)=\prod_{k\in\mathbb{N}_D}\left(\prod_{j'\in\mathbb{N}_{n_k}}\frac{1}{2s}\mathrm{exp}(-|b^k_{j'}|/s)\right)\left(\prod_{j\in\mathbb{N}_{d_k}}\frac{1}{2s}\mathrm{exp}\left(-|w^k_j|/s\right)\right). 
\end{equation*}
Therefore, we derive that 
\begin{align*}
\log(p(\Theta))%
    &=\sum_{k\in\mathbb{N}_D}\left(
    -(n_k+d_k)\log(2s)-\frac{1}{s}\sum_{j'\in\mathbb{N}_{n_k}}|b^k_{j'}|-\frac{1}{s}\sum_{j\in\mathbb{N}_{d_k}}|w^k_j|\right).
\end{align*}
Noting that $w=[w^k:k\in\mathbb{N}_D]$, $b=[b^k:k\in\mathbb{N}_D]$ and $t=\sum_{k\in\mathbb{N}_D}(n_k+d_k)$, the above equation leads to 
\begin{equation}\label{single ln p Theta with bias regularized}
        \log(p(\Theta))=C_2-\frac{1}{s}(\|w\|_1+\|b\|_1),
\end{equation} 
where $C_2:=-t\log(2s)$ is also a constant independent of $\Theta$. Substituting equations \eqref{ln p(y|Theta)} and \eqref{single ln p Theta with bias regularized} into the object function of the optimization problem \eqref{optimization problem two ln}, and noting that $C_1$ and $C_2$ are constants independent of $\Theta$, we may rewrite problem \eqref{optimization problem two ln} as 
\begin{equation}\label{proof argmax equiv pr with bias regularized}
\max\left\{-\frac{1}{2v^2}\sum_{i\in\mathbb{N}_N}\left\|\mathcal{N}_{\Theta}(x^i)-y^i\right\|_2^2-\frac{1}{s}(\|w\|_1+\|b\|_1):\Theta:=(w,b)\in\mathbb{R}^t\right\}.
\end{equation}
By multiplying $-2v^2$ on each term of the object function of problem \eqref{proof argmax equiv pr with bias regularized} and letting $\lambda:=2v^2/s$, we observe that problem \eqref{proof argmax equiv pr with bias regularized} is equivalent to problem \eqref{single lambda regression model with bias regularized}. Therefore, problems \eqref{optimization problem two ln} and \eqref{single lambda regression model with bias regularized} are equivalent. 
\end{proof}

In the regularization model \eqref{single lambda regression model with bias regularized}, the weight parameters and the bias parameters are both regularized, due to the assumption that both of them follow the Laplace distribution. It was pointed out in \cite{goodfellow2016} that the neural network may suffer from underfitting when the bias parameters are regularized. In fact, the regularization term for the bias parameters in model \eqref{single lambda regression model with bias regularized} is derived from the prior Laplace distribution, which takes a higher probability around the value of zero. This motivates us to assume a uniform distribution on the bias parameters. In other words, we assume that the bias parameters take values with equal probabilities. We use $\mathrm{Uniform} (a,a')$ to denote the uniform distribution over the interval $(a,a')$.  The probability density function of a random variable $x$ following $\mathrm{Uniform} (a,a')$ is given by 
\begin{equation}\label{pdf of Uniform}
    p(x):=\begin{cases}
        \frac{1}{a'-a}, & x\in[a,a'],\\
        0, &\mathrm{otherwise}.
    \end{cases}
\end{equation}
In the next proposition, we derive a regularization problem under the assumption that the bias parameters independently follow $\mathrm{Uniform} (-M,M)$ with $M>0$.

\begin{proposition}\label{prop: square loss single lambda}
Suppose that $\left\{\left(x^i,y^i\right)\in\mathbb{R}^p\times\mathbb{R}^q: i\in\mathbb{N}_N\right\}$ is a given dataset, and the labels $y^i$ are the observed values of the random variables $\tilde{y}^i$ defined by equation \eqref{epsilon j tilde}  with $\epsilon_j^i$, $i\in\mathbb{N}_N$, $j\in\mathbb{N}_q$, independently following $\mathrm{Gaussian} (0,v^2)$ with $v>0$ and DNNs $\mathcal{N}_{\Theta}$ satisfy equation \eqref{epsilon j} for parameters $\Theta:=(w,b)$ with $w\in\mathbb{R}^{d_W}$, $b\in\mathbb{R}^{d_b}$. If for each $k\in\mathbb{N}_D$, $w_j^k$, $j\in\mathbb{N}_{d_k}$ independently follow $\mathrm{Laplace}(0,s)$ with $s>0$, and $b^k_j$, $j\in\mathbb{N}_{n_k}$ independently follow $\mathrm{Uniform}(-M,M)$ with $M>0$, then the MAP estimate $\Theta^*$ is a solution of the following optimization problem
\begin{equation}\label{single lambda regression model}
    \min\left\{\sum_{i\in\mathbb{N}_N}\left\|\mathcal{N}_{\Theta}(x^i)-y^i\right\|_2^2 + \lambda\|w\|_1:\Theta:=(w,b)\in\mathbb{R}^t\right\}
\end{equation}
with $\lambda:=2v^2/s$.%
\end{proposition}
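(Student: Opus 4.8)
The plan is to follow the same route as the proof of Proposition~\ref{prop: square loss single lambda with bias regularized}, namely to show that the MAP problem \eqref{optimization problem two ln} is equivalent to \eqref{single lambda regression model}, the only change being the prior imposed on the bias parameters. To this end I would again split the objective of \eqref{optimization problem two ln} into the log-likelihood term $\log(p(\widetilde{Y}=Y|\Theta))$ and the log-prior term $\log(p(\Theta))$, and evaluate each separately.

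First I would compute $\log(p(\widetilde{Y}=Y|\Theta))$. Since the noise model \eqref{epsilon j} and the Gaussian assumption on $\epsilon_j^i$ are unchanged, this computation is verbatim that of the previous proposition: the random variables $\widetilde{y}^i_j$ are independent $\mathrm{Gaussian}\left(\left(\mathcal{N}_\Theta(x^i)\right)_j, v^2\right)$, so their joint density yields, exactly as in \eqref{ln p(y|Theta)},
\[
\log(p(\widetilde{Y}=Y|\Theta)) = C_1 - \frac{1}{2v^2}\sum_{i\in\mathbb{N}_N}\left\|\mathcal{N}_\Theta(x^i)-y^i\right\|_2^2,
\]
with $C_1$ a constant independent of $\Theta$.

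Next, and this is where the two proofs diverge, I would compute $\log(p(\Theta))$. By the independence of all weight and bias entries, $p(\Theta)$ factors as in \eqref{w and b are independent}, but now the bias factors use the uniform density \eqref{pdf of Uniform} with $a=-M$, $a'=M$ rather than the Laplace density. On the support each bias factor equals the constant $1/(2M)$, while each weight factor is the Laplace density. Hence $\log(p(\Theta))$ decomposes as a weight contribution $-d_W\log(2s) - \frac{1}{s}\|w\|_1$ plus a bias contribution $-d_b\log(2M)$ that is constant in $b$. Absorbing both constants together with $C_1$, substituting into \eqref{optimization problem two ln}, multiplying the objective by $-2v^2$, and setting $\lambda := 2v^2/s$ then recovers \eqref{single lambda regression model}, in which the bias no longer appears in the penalty.

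The one genuine subtlety, which I expect to be the main obstacle, is the bounded support of the uniform prior: the requirement $p(\Theta)>0$ in \eqref{optimization problem two ln} forces the feasibility constraint $b\in[-M,M]^{d_b}$, whereas \eqref{single lambda regression model} is unconstrained in $b$. I would resolve this by observing that the penalty is insensitive to $b$ on the support, so the bias enters the MAP objective only through the fidelity term; consequently, for $M$ chosen large enough that an unconstrained minimizer of \eqref{single lambda regression model} has all bias entries in $(-M,M)$, the constrained MAP optimizer and the unconstrained optimizer coincide. This is the only point demanding care beyond the routine density bookkeeping shared with Proposition~\ref{prop: square loss single lambda with bias regularized}.
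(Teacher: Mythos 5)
Your proof follows the paper's own argument essentially verbatim: the same splitting of the MAP objective \eqref{optimization problem two ln} into the Gaussian log-likelihood (reused unchanged from Proposition \ref{prop: square loss single lambda with bias regularized}) and the log-prior, the same factorization \eqref{w and b are independent} with Laplace densities for the weights and the constant uniform density for the biases, and the same final rescaling by $-2v^2$ with $\lambda:=2v^2/s$. The one place you go beyond the paper is the support issue: the paper silently passes from the MAP problem, which via $p(\Theta)>0$ constrains $b\in[-M,M]^{d_b}$, to the unconstrained problem \eqref{single lambda regression model}, whereas you correctly flag that the equivalence requires the unconstrained minimizer's biases to lie in that box --- a legitimate refinement, with the caveat that $M$ is fixed by the hypothesis rather than at your disposal, so strictly one should read your remark as identifying an implicit largeness assumption on $M$ rather than something you may choose.
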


\begin{proof}
Let $Y:=[y^i_j:i\in\mathbb{N}_N,j\in\mathbb{N}_q]\in\mathbb{R}^{q\times N}$ and consider the random variable $\widetilde{Y}:=[\widetilde{y}^i_j:i\in\mathbb{N}_N,j\in\mathbb{N}_q]\in\mathbb{R}^{q\times N}$. We have known that the MAP estimate $\Theta^*$ is the solution of the optimization problem \eqref{optimization problem two ln}. Since $\epsilon_j^i$, $i\in\mathbb{N}_N$, $j\in\mathbb{N}_q$, independently following $\mathrm{Gaussian}(0,v^2)$ with $v>0$, the same as in Proposition \ref{prop: square loss single lambda with bias regularized}, the term $\log(p(\widetilde{Y}=Y|\Theta))$ has been computed in equation \eqref{ln p(y|Theta)}. 
It remains to compute $\log(p(\Theta))$. Since for each $k\in\mathbb{N}_D$, $w^k_j$, $j\in\mathbb{N}_{d_k}$ independently follow $\mathrm{Laplace}(0,s)$ with $s>0$ and $b^k_j$, $j\in\mathbb{N}_{n_k}$ independently follow $\mathrm{Uniform}(-M,M)$ with $M>0$, we have \eqref{w and b are independent}, which combining with the probability density functions \eqref{pdf of Laplace} and \eqref{pdf of Uniform} of Laplace and uniform distributions, respectively, implies
\begin{equation}\label{Compute:P(Theta)}
    p(\Theta)=\prod_{k\in\mathbb{N}_D}\frac{1}{(2M)^{n_k}}\prod_{j\in\mathbb{N}_{d_k}}\frac{1}{2s}\mathrm{exp}\left(-\frac{|w^k_j|}{s}\right).
\end{equation}
By using equation \eqref{Compute:P(Theta)} and letting 
$$
C_3:=-\log(2M)\sum_{k\in\mathbb{N}_D} n_k - \log(2s)\sum_{k\in\mathbb{N}_D} d_k,
$$
we obtain that 
\begin{equation}\label{single ln p Theta}
        \log(p(\Theta))=C_3-\frac{1}{s}\|w\|_1.
\end{equation} 
Note that the constant $C_3$ is independent of $\Theta$. Combining equations \eqref{ln p(y|Theta)} and \eqref{single ln p Theta}, the optimization problem \eqref{optimization problem two ln} can be rewritten as 
\begin{equation*}\label{proof argmax equiv pr}
    \max\left\{-\frac{1}{2v^2}\sum_{i\in\mathbb{N}_N}\left\|\mathcal{N}_{\Theta}(x^i)-y^i\right\|_2^2- \frac{1}{s}\|w\|_1:\Theta:=(w,b)\in\mathbb{R}^t\right\}, 
\end{equation*}
which is further equivalent to problem \eqref{single lambda regression model} with $\lambda:=2v^2/s$. 
\end{proof}

Note that in Proposition \ref{prop: square loss single lambda}, by assuming a uniform distribution for the bias parameters, we leave them unregularized in model \eqref{single lambda regression model} to prevent potential underfitting. We also remark that the constant $M$ in the uniform distribution assumed on the bias parameters is absorbed by the constant $C_3$ and thus it does not influence the resulting optimization problem \eqref{single lambda regression model}. 
The quantity $\lambda$ appearing in the optimization problem \eqref{single lambda regression model} is the regularization parameter. It is positively correlated with the variance of noise but negatively with the variance of the weight parameters. This suggests that a larger value of $\lambda$ should be chosen when dealing with higher levels of noise in the observed data, aiming to promote more zero entries in the weight parameters.  The fidelity term in problem \eqref{single lambda regression model} is the squared loss function, a common choice for regression.

We next derive the single-parameter regularization model for classification by imposing a different prior distribution on the given data. Let $K\in\mathbb{N}$ and $\left\{\left(x^i,y^i\right)\in\mathbb{R}^p\times\mathbb{N}_K: i\in\mathbb{N}_N\right\}$ be a given dataset. The observed labels $y^i$, $i\in\mathbb{N}_N$, may be mislabeled. For each $i\in\mathbb{N}_N$, similarly to the regression setting, we assume that the true label of $x^i$ is $\hat{y}^i$. For the multi-class classification problems, we may assume that the labels are random variables following the categorical distribution, a {\it discrete} probability distribution whose sample space is the set of $K$ individually identified items. The categorical distribution is the generalization of the Bernoulli distribution to multiple categories. We denote by $\mathrm{Cat}(K,P)$ the categorical distribution with $K$ categories and parameters $P=[P_j: j\in\mathbb{N}_K]$ satisfying $P_j\geq0$, $j\in\mathbb{N}_K$,  where $P_j$ represents the probability of being in $j$-th category and $\|P\|_1=1$. The probability mass function of a random variable $x$ following $\mathrm{Cat}(K,P)$ is given by 
\begin{equation}\label{pmf of cat}
    p(x=j)=P_j, \ j\in\mathbb{N}_K.  
\end{equation}
For each $j\in\mathbb{N}_K$, let $e_j$ denote the unit vector with $1$ for the $j$-th component and 0 otherwise. For each $i\in\mathbb{N}_N$, we introduce a random variable $\widetilde{y}^i$ satisfying 
\begin{equation}\label{classification tilde yi true}
\widetilde{y}^i=\mathrm{Cat}(K,e_{\hat{y}^i}).  
\end{equation}
To describe the prior distribution followed by the given data, we need the softmax function  $\mathcal{S}:\mathbb{R}^K\to[0,1]^K$ for $z=[z_j:j\in\mathbb{N}_K]\in\mathbb{R}^K$ defined by 
\begin{equation}\label{softmax function}
    \left(\mathcal{S}(z)\right)_j:=\frac{e^{z_j}}{\sum_{i\in\mathbb{N}_K} e^{z_i}},\quad j\in\mathbb{N}_K.
\end{equation}
The softmax function defined by \eqref{softmax function} is introduced to normalize the output of neural networks into probabilities for different categories.
Recall that a DNN $\mathcal{N}_\Theta$ defined by \eqref{DNN} with $n_D:=K$ is a vector-valued function from $\mathbb{R}^p$
to $\mathbb{R}^K$. We may approximate $e_{\hat{y}^i}$ in \eqref{classification tilde yi true} by $\mathcal{S}(\mathcal{N}_\Theta(x^i))$ through the model
\begin{equation}\label{classification tilde yi given}
    \widetilde{y}^i=\mathrm{Cat}(K,\mathcal{S}(\mathcal{N}_\Theta(x^i))),
\end{equation}
for parameters $\Theta:=(w,b)$. 

The next proposition presents the single-parameter regularization model by the MAP estimate for classification.

\begin{proposition}\label{prop: single lambda classification}
    Let $K\in\mathbb{N}$. Suppose that $\left\{\left(x^i,y^i\right)\in\mathbb{R}^p\times\mathbb{N}_K: i\in\mathbb{N}_N\right\}$ is a given dataset and the labels $y^i$ are the observed values of the random variables $\tilde{y}^i$ defined by equation \eqref{classification tilde yi true} and DNNs $\mathcal{N}_{\Theta}$ satisfy equation \eqref{classification tilde yi given} for parameters $\Theta:=(w,b)$ with $w\in\mathbb{R}^{d_W}$, $b\in\mathbb{R}^{d_b}$. If for each $k\in\mathbb{N}_D$, $w^k_j$, $j\in\mathbb{N}_{d_k}$, independently follow $\mathrm{Laplace}(0,s)$ with $s>0$, and $b_j^k$, $j\in\mathbb{N}_{n_k}$, independently follow $\mathrm{Uniform}(-M,M)$ with $M>0$, then the MAP estimate $\Theta^*$ is a solution of the following optimization problem
    \begin{equation}\label{single lambda classification model}
    \min\left\{-\sum_{i\in\mathbb{N}_N} \log\left(\left(\mathcal{S}(\mathcal{N}_{\Theta}(x^i))\right)_{y^i}\right) + \lambda\|w\|_1:\Theta:=(w,b)\in\mathbb{R}^t\right\}
    \end{equation}
    with $\lambda:=1/s$. 
\end{proposition}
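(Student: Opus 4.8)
The plan is to follow the template established in the proofs of Propositions \ref{prop: square loss single lambda with bias regularized} and \ref{prop: square loss single lambda}: the MAP estimate $\Theta^*$ is the maximizer of the posterior, which by \eqref{optimization problem two ln} is equivalent to maximizing $\log(p(\widetilde{Y}=Y|\Theta))+\log(p(\Theta))$. Thus it suffices to evaluate these two terms under the present hypotheses and show that the resulting maximization coincides with problem \eqref{single lambda classification model}.

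First I would compute the likelihood $\log(p(\widetilde{Y}=Y|\Theta))$, which is where the classification setting departs from the regression propositions. Under the categorical model \eqref{classification tilde yi given}, for each $i\in\mathbb{N}_N$ the random variable $\widetilde{y}^i$ follows $\mathrm{Cat}(K,\mathcal{S}(\mathcal{N}_\Theta(x^i)))$ — and the softmax output is a valid parameter vector since its components are nonnegative and sum to one — so by the probability mass function \eqref{pmf of cat} the probability of observing the particular label $y^i\in\mathbb{N}_K$ is exactly the $y^i$-th component $\left(\mathcal{S}(\mathcal{N}_\Theta(x^i))\right)_{y^i}$. The one point that requires care is this indexing: the likelihood selects the softmax coordinate corresponding to the observed class. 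Invoking independence of the observations across $i$, the joint likelihood factors as $\prod_{i\in\mathbb{N}_N}\left(\mathcal{S}(\mathcal{N}_\Theta(x^i))\right)_{y^i}$, and taking logarithms yields
\begin{equation*}
\log(p(\widetilde{Y}=Y|\Theta))=\sum_{i\in\mathbb{N}_N}\log\left(\left(\mathcal{S}(\mathcal{N}_\Theta(x^i))\right)_{y^i}\right),
\end{equation*}
which is the negative of the cross-entropy loss and, unlike the Gaussian case, carries no additive constant and no multiplicative prefactor.

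For the prior term $\log(p(\Theta))$, the hypotheses here are identical to those of Proposition \ref{prop: square loss single lambda} — namely $\mathrm{Laplace}(0,s)$ on the weights and $\mathrm{Uniform}(-M,M)$ on the biases — so I would simply reuse equation \eqref{single ln p Theta}, that is, $\log(p(\Theta))=C_3-\frac{1}{s}\|w\|_1$ with $C_3$ independent of $\Theta$. Substituting both expressions into \eqref{optimization problem two ln}, discarding the constant $C_3$, and negating the objective to pass from maximization to minimization gives precisely problem \eqref{single lambda classification model}. I do not expect a genuine obstacle: the argument is entirely mechanical once the categorical likelihood is set up correctly, and because the likelihood produces no prefactor to be absorbed, the regularization parameter comes out as $\lambda:=1/s$ rather than the $\lambda:=2v^2/s$ of the regression models.
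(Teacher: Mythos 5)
Your proposal is correct and follows essentially the same route as the paper's own proof: invoking \eqref{optimization problem two ln}, computing the categorical log-likelihood $\sum_{i\in\mathbb{N}_N}\log\left(\left(\mathcal{S}(\mathcal{N}_\Theta(x^i))\right)_{y^i}\right)$ via independence and the probability mass function \eqref{pmf of cat}, and reusing the prior computation \eqref{single ln p Theta} from Proposition \ref{prop: square loss single lambda}. Your remark that the absence of a likelihood prefactor is what yields $\lambda:=1/s$ instead of $2v^2/s$ is a correct and welcome clarification, but it does not change the argument.
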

\begin{proof}
Let $Y:=[y^i:i\in\mathbb{N}_N]\in\mathbb{N}_K^{N}$ and consider the random variable $\widetilde{Y}:=[\widetilde{y}^i:i\in\mathbb{N}_N]\in\mathbb{N}_K^{N}$. As pointed out before, the MAP estimate $\Theta^*$ is the solution of the optimization problem \eqref{optimization problem two ln}. It suffices to compute the two probabilities involved in  \eqref{optimization problem two ln} by using the hypothesis of this proposition. It is noted that for each $k\in\mathbb{N}_D$, $w^k_j$, $j\in\mathbb{N}_{d_k}$ independently follow $\mathrm{Laplace}(0,s)$ with $s>0$, and $b_j^k$, $j\in\mathbb{N}_{n_k}$ independently follow $\mathrm{Uniform}(-M,M)$ with $M>0$. As has been shown in the proof of Proposition \ref{prop: square loss single lambda}, the quantity $\log(p(\Theta))$ can be represented as in equation \eqref{single ln p Theta}. 

It remains to compute $\log(p(\widetilde{Y}=Y|\Theta))$ under the assumptions of this proposition. Since the random variables $\widetilde{y}^i$, $i\in\mathbb{N}_N$ are independent, it follows that
\begin{equation}\label{Computing:Categorical_D}
p(\widetilde{Y}=Y|\Theta)=\prod_{i\in\mathbb{N}_N} p(\widetilde{y}^i=y^i|\Theta).
\end{equation} 
Invoking equation \eqref{classification tilde yi given}
and the probability mass function \eqref{pmf of cat} of the categorical distribution in the right-hand side of \eqref{Computing:Categorical_D} yields
$$
p(\widetilde{Y}=Y|\Theta)=\prod_{i\in\mathbb{N}_N}\left(\mathcal{S}(\mathcal{N}_{\Theta}(x^i))\right)_{y^i}.
$$
Therefore, we obtain that 
\begin{equation}\label{proof log p classification}
\log(p(\widetilde{Y}=Y|\Theta))= \sum_{i\in\mathbb{N}_N} \log\left(\left(\mathcal{S}(\mathcal{N}_{\Theta}(x^i))\right)_{y^i}\right).
\end{equation}
By substituting equations \eqref{single ln p Theta} and \eqref{proof log p classification} into problem \eqref{optimization problem two ln}, we conclue that the MAP estimate $\Theta^*$ can be obtained from solving the optimization problem \eqref{single lambda classification model}. 
\end{proof}

The fidelity term in the regularization problem \eqref{single lambda classification model} is commonly referred to as the cross entropy loss function \cite{murphy2012machine}. Previous studies \cite{polson2017deep, rudner2023function} have demonstrated that the deep learning model with a single regularization parameter can be derived from the Bayesian perspective, where both weights and biases are subject to regularization. However, in our approach, we opt to leave biases unregularized by adopting a uniform distribution as the prior, thereby mitigating the risk of potential underfitting. 

We have derived the single-parameter regularization models \eqref{single lambda regression model} and \eqref{single lambda classification model} by assuming different prior distributions on the given dataset, which lead to different loss functions. Practically, the given samples may be assumed to follow other distributions among various scientific areas. In medical imaging, the Poisson distribution is often assumed as the prior distribution on the observed data, such as positron emission tomography (PET) or the single-photon emission computed tomography (SPECT) data. We denote by $\mathrm{Poisson}(\alpha)$ the Poisson distribution with mean $\alpha$. Let $f$ represent the expected radiotracer distribution, $g$ be the observed projection data, $\gamma$ be the additive counts and $A$ be the SPECT system matrix. Following \cite{krol2012preconditioned}, the emission computed tomography (ECT) model assumes that $g$ follows $\mathrm{Poisson}(Af+\gamma)$ and the resulted fidelity term for MAP estimate is formulated by 
\begin{equation*}
    \mathcal{L}(f):=\langle Af,\mathbf{1}\rangle -\langle \ln(Af+\gamma), g\rangle,  
\end{equation*}
where $\mathbf{1}$ denotes the vector with all components equal to $1$.

In signal processing, the one-bit compressive sensing problem is of interest, where the observed data is assumed to satisfy the two-point distribution for MAP estimate. Let $x$ represent the signal we aim to recover, $y$ be the measurement vector and $B$ be the measurement matrix. For given $x$, \cite{dai2016noisy} assumes that each component of $y$ independently follows a two-point distribution 
\begin{equation*}
    p\left(y_i | x\right)= \begin{cases}1-a, & \text { if } y_i=(\operatorname{sign}(B x))_i, \\ a, & \text { if } y_i=-(\operatorname{sign}(B x))_i,\end{cases}
\end{equation*}
where $a\in(0,1)$ represents the fraction of sign flips. Accordingly, the fidelity term for MAP estimate has the form  
\begin{equation*}
    \mathcal{L}(x):=c\left\|y-\mathrm{sign}(Bx)\right\|_0,
\end{equation*}
where $c:=\log(1-a)-\log(a)$. 

In considering the diverse fidelity terms derived from different assumptions regarding the prior distribution of practical data, we propose to examine a general loss function to study the impact of the regularization parameter on the sparsity of the regularized solutions. We recall that there exist $t$ trainable parameters in the DNN with the form \eqref{DNN}. For the given dataset $\mathcal{D}$, we denote by  $\mathcal{L}_{\mathcal{D}}:\mathbb{R}^t\to\mathbb{R}$ a general loss function of variable $\Theta\in\mathbb{R}^t$. When there is no ambiguity, we shall write $\mathcal{L}$ to replace $\mathcal{L}_{\mathcal{D}}$ for simplicity. The single-parameter regularization model can then be formulated as 
\begin{equation}\label{general single lambda model}
    \min\left\{\mathcal{L}(\Theta) + \lambda\|w\|_1:\Theta:=\left(w,b\right)\in\mathbb{R}^t\right\},
\end{equation}
where $\lambda$ is a positive regularization parameter.  

\section{Choice of the Regularization Parameter} \label{section: parameter choice for single, sparsity and accuracy}
In this section, we delve into the relationship between the choice of the regularization parameter and the sparsity of the regularized solutions of problem \eqref{general single lambda model}. 

We first recall that a vector in $\mathbb{R}^n$ is said to have sparsity of level $l\in\mathbb{Z}_{n+1}:=\{0,1,\ldots,n\}$ if it has exactly $l$ nonzero components. To represent sparsity of vectors in $\mathbb{R}^n$ specifically, we need the sparsity partition of $\mathbb{R}^n$ introduced in \cite{xu2022sparse}.  Recall that $e_j,j\in\mathbb{N}_n,$ denote the canonical basis for $\mathbb{R}^n$. Using these vectors, we define $n+1$ subsets of $\mathbb{R}^n$ as follows: 
\begin{eqnarray*}
\Omega_{n,0}&:=&\{0\},\\ 
\Omega_{n,l}&:=&\left\{\sum_{j\in\mathbb{N}_l}u_{s_j}e_{s_j}
:u_{s_j}\neq 0\ \mathrm{for} \
1\leq s_1<s_2<\cdots< s_l\leq n\right\}, \ \mbox{for}\ l\in\mathbb{N}_n.
\end{eqnarray*}
For each $l\in\mathbb{Z}_{n+1}$, the subset $\Omega_{n,l}$ coincides with the set of all vectors in $\mathbb{R}^n$ having
sparsity of level $l$. 
We define an ordered subset of $\mathbb{N}_{n}$ with cardinality $l\in\mathbb{N}_n$ by
$$
\mathbb{S}_{n,l}:=\{s_i\in \mathbb{N}_{n}: i\in\mathbb{N}_l, \ \mbox{with}\ 1\leq s_1<s_2<\cdots<s_l\leq n\}.
$$
In this notation, for $w\in \Omega_{n,l}$, there exists an ordered subset $\mathbb{S}_{n,l}$ of $\mathbb{N}_{n}$ with cardinality $l$ such that
$$
w=\sum_{i\in \mathbb{S}_{n,l}}w_ie_i, \ \mbox{with}\ w_i\neq 0, \ \mbox{for all}\ i\in \mathbb{S}_{n,l}.
$$

To characterize the local minimizer of problem \eqref{general single lambda model}, we recall the notion of the general subdifferential of a proper function $f:\mathbb{R}^n\to\mathbb{R}\cup\{+\infty\}$, which may not be convex \cite{rockafellar2009variational}. The domain of $f$ is denoted by 
$$
\mathrm{dom}(f):=\left\{x\in\mathbb{R}^n:f(x)<+\infty\right\}.
$$
For each $x\in\mathrm{dom}(f)$, we say that $v\in\mathbb{R}^n$ is a regular subdifferential of $f$ at $x$ if 
\begin{equation*}
    \liminf _{\substack{y \neq x \\ y \rightarrow x}} \frac{1}{\|x-y\|}\left[f(y)-f(x)-\left\langle v, y-x\right\rangle\right] \geq 0.
\end{equation*}
We denote by $\hat\partial f(x)$ the set of regular subdifferential of $f$ at $x$. 
Moreover, a vector $v \in \mathbb{R}^n$ is called a general subdifferential of $f$ at $x\in\mathrm{dom}(f)$ if there exist a sequence $\{x_m\}_{m=1}^\infty$ such that $ x_m \rightarrow x$ with $f\left(x_m\right) \rightarrow f(x)$ and $v_m \in \hat{\partial} f\left(x_m\right)$ with $v_m\to v$, as $m\to\infty$. We denote by $\partial f(x)$ the set of general subdifferential of $f$ at $x$.

We recall several useful properties of the general subdifferential. If $f$ is finite at $x$ and $g$ is differentiable on a neighborhood of $x$, then there holds 
$$
\partial (f+g)(x)=\partial f(x)+\nabla g(x).
$$
For any proper, lower semicontinuous and convex function $f:\mathbb{R}^n\to\mathbb{R}\cup\{+\infty\}$ and any point $x\in\mathrm{dom}(f)$, there holds 
\begin{equation*}\label{Classical_Subdifferential}
    \partial f(x)=\left\{v\in\mathbb{R}^n : f(y) \geq f(x)+\langle v, y-x\rangle \text { for all } y\in\mathbb{R}^n\right\}.
\end{equation*}
In other words, in such a case,  $\partial f(x)$ coincides with the classical subdifferential of convex function $f$. 
The generalized Fermat rule states that if a proper function $f:\mathbb{R}^n\to\mathbb{R}\cup\{+\infty\}$ has a local minimum at $x$, then $0\in\partial f(x)$.

We are ready to present the relation between the parameter choice and the sparsity of a local minimizer of problem \eqref{general single lambda model}. 
\begin{theorem}\label{prop: single choice_sparsity_general_smooth}
Suppose that  $\mathcal{L}$ is differentiable on $\mathbb{R}^t$. If problem \eqref{general single lambda model} with $\lambda>0$ has a local minimizer $\Theta^{*}:=(w^*,b^*)$ with $w^*:=\sum_{i\in\mathbb{S}_{d_W,l^*}}w_{i}^*{e}_{i}\in \Omega_{d_W,l^*}$ for some $l^*\in\mathbb{Z}_{d_W+1}$, %
then %
\begin{equation}\label{single lambda_general_smooth1}
\lambda=-\nabla_{w_{i}}\mathcal{L}(\Theta^*)\mathrm{sign}\left(w_{i}^*\right), \
i\in \mathbb{S}_{d_W,l^*},
\end{equation}
and
\begin{equation}\label{single lambda_general_smooth11}
\lambda\geq\left|\nabla_{w_i}\mathcal{L}(\Theta^*)\right|, \ i\in \mathbb{N}_{d_W}\setminus\mathbb{S}_{d_W,l^*}.
\end{equation}
\end{theorem}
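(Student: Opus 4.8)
The plan is to derive the stated conditions from the generalized Fermat rule recalled above, combined with the sum rule for the general subdifferential and the classical subdifferential of the $\ell_1$-norm. Set $F(\Theta):=\mathcal{L}(\Theta)+\lambda\|w\|_1$ and write $g(\Theta):=\lambda\|w\|_1$, viewed as a function on $\mathbb{R}^t=\mathbb{R}^{d_W}\times\mathbb{R}^{d_b}$ that depends on the weight block $w$ only. Since $\Theta^*$ is a local minimizer of $F$, the generalized Fermat rule gives $0\in\partial F(\Theta^*)$.

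First I would separate the smooth part. Because $\mathcal{L}$ is differentiable on all of $\mathbb{R}^t$ and $g$ is finite at $\Theta^*$, the sum rule $\partial(f+g)(x)=\partial f(x)+\nabla g(x)$, applied with the smooth summand being $\mathcal{L}$, yields $\partial F(\Theta^*)=\partial g(\Theta^*)+\nabla\mathcal{L}(\Theta^*)$. The Fermat inclusion then reads $-\nabla\mathcal{L}(\Theta^*)\in\partial g(\Theta^*)$. Since $g$ is proper, lower semicontinuous and convex, its general subdifferential coincides with the classical convex subdifferential, so I only need to compute the latter.

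The next step is to evaluate $\partial g(\Theta^*)$ coordinatewise. As $g$ equals $\lambda\sum_i|w_i|$ in the weight coordinates and is independent of the bias coordinates, its subdifferential factorizes: it is $\{0\}$ on the bias block, the singleton $\{\lambda\,\mathrm{sign}(w_i^*)\}$ on a weight coordinate $i$ with $w_i^*\neq 0$, and the interval $[-\lambda,\lambda]$ on a weight coordinate with $w_i^*=0$. By hypothesis the support of $w^*$ is exactly $\mathbb{S}_{d_W,l^*}$, so reading $-\nabla\mathcal{L}(\Theta^*)\in\partial g(\Theta^*)$ coordinate by coordinate gives $-\nabla_{w_i}\mathcal{L}(\Theta^*)=\lambda\,\mathrm{sign}(w_i^*)$ for $i\in\mathbb{S}_{d_W,l^*}$ and $|\nabla_{w_i}\mathcal{L}(\Theta^*)|\leq\lambda$ for $i\in\mathbb{N}_{d_W}\setminus\mathbb{S}_{d_W,l^*}$. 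Multiplying the first identity by $\mathrm{sign}(w_i^*)$ and using $(\mathrm{sign}(w_i^*))^2=1$ on the support produces equation \eqref{single lambda_general_smooth1}, while the second is exactly \eqref{single lambda_general_smooth11}; the bias block of the inclusion records only $\nabla_b\mathcal{L}(\Theta^*)=0$, which is not part of the assertion.

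I do not expect a serious obstacle here: once the Fermat rule and the sum rule are in hand the computation is routine. The only point demanding care is the coordinatewise evaluation of $\partial g$ on the product space $\mathbb{R}^{d_W}\times\mathbb{R}^{d_b}$ — in particular that the inactive weight coordinates contribute the full interval $[-\lambda,\lambda]$ rather than merely its endpoints, and that the bias independence contributes zero — together with ensuring the sum rule is invoked with $\mathcal{L}$ (not the nonsmooth $\ell_1$ term) as the differentiable summand, which is precisely the standing hypothesis that $\mathcal{L}$ is differentiable everywhere.
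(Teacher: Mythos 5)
Your proposal is correct and follows essentially the same route as the paper's proof: the generalized Fermat rule, the sum rule with $\mathcal{L}$ as the differentiable summand, identification of the general subdifferential of the convex term $\lambda\|\cdot\|_1$ with its classical subdifferential, and a coordinatewise reading of the resulting inclusion on and off the support of $w^*$. The only cosmetic differences are that you make the bias-block consequence $\nabla_b\mathcal{L}(\Theta^*)=0$ and the multiplication by $\mathrm{sign}(w_i^*)$ explicit, both of which the paper leaves implicit.
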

\begin{proof}
    By setting $\mathcal{R}(\Theta):=\lambda\|w\|_1$ for $\Theta:=(w,b)\in\mathbb{R}^t$, we represent problem \eqref{general single lambda model} as 
    \begin{equation} \label{single lambda model R theta}
    \min\left\{\mathcal{L}(\Theta) + \mathcal{R}(\Theta):\Theta\in\mathbb{R}^t \right\}.
    \end{equation}
    It follows from the generalized Fermat rule that if $\Theta^*$ is a local minimizer of problem \eqref{single lambda model R theta} then there holds
     \begin{equation}\label{single 0 in subgradient}
    0\in\partial (\mathcal{L}+\mathcal{R})(\Theta^*).    
    \end{equation}
    Since $\mathcal{L}$ is differentiable, 
    we have that 
    $$
    \partial (\mathcal{L}+\mathcal{R})(\Theta^*)=\nabla\mathcal{L}(\Theta^*)+\partial\mathcal{R}(\Theta^*).
    $$ 
   By the definition of $\mathcal{R}$, it is convex, and thus, $\partial\mathcal{R}(\Theta^*)$ is the classical subdifferential of the convex function $\mathcal{R}$. Substituting the above equation into the right-hand side of inclusion relation \eqref{single 0 in subgradient} yields that 
   $$
   0\in\nabla\mathcal{L}(\Theta^*)+\partial\mathcal{R}(\Theta^*).
   $$
   Noting that function $\mathcal{R}$ is independent of the bias vectors, we get from the above inclusion relation that 
    \begin{equation*}
    0\in\nabla_{w}\mathcal{L}(\Theta^*)+ \lambda\partial\|\cdot\|_1(w^*),
    \end{equation*}
    which further leads to 
    \begin{equation}\label{single 0 inclusion relation}
    -\nabla_{w}\mathcal{L}(\Theta^*)\in\lambda\partial\|\cdot\|_1(w^*).
    \end{equation} 
    By hypothesis, we have that 
    \begin{equation}\label{Sparse_Vector}
         w^*=\sum_{i\in\mathbb{S}_{d_W,l^*}}w_{i}^*{e}_{i}\in \Omega_{d_W,l^*}\ \  \mbox{with}\ \ w_{i}^*\in\mathbb{R}\setminus\{0\},\ i\in\mathbb{S}_{d_W,l^*},
    \end{equation}
    with which we further observe that 
    \begin{equation}\label{single subdifferential of l1 norm}
    \partial\|\cdot\|_1\left(w^*\right)=\left\{z\in\mathbb{R}^{d_W}: z_{i}=\mathrm{sign}(w_{i}^*), i\in\mathbb{S}_{d_W,l^*} \ \mbox{and}\ |z_i|\leq 1,\ i\in\mathbb{N}_{d_W}\setminus\mathbb{S}_{d_W,l^*}\right\}.
    \end{equation}
    Combining inclusion relation \eqref{single 0 inclusion relation} with equation \eqref{single subdifferential of l1 norm}, for $i\in \mathbb{S}_{d_W,l^*}$, we find that 
    $$
    -\nabla_{w_{i}}\mathcal{L}(\Theta^*)=\lambda \mathrm{sign}(w_{i}^*),
    $$
    which yields equation  \eqref{single lambda_general_smooth1}, and for $i\in\mathbb{N}_{d_W}\setminus\mathbb{S}_{d_W,l^*}$, we obtain that
    $
    |\nabla_{w_{i}}\mathcal{L}(\Theta^*)|\leq \lambda,
    $
    which is exactly the inequality \eqref{single lambda_general_smooth11}.
\end{proof}

Theorem \ref{prop: single choice_sparsity_general_smooth} affirms that a regularization parameter $\lambda^*$ such that problem \eqref{general single lambda model} has a local minimizer $\Theta^{*}:=(w^*,b^*)$ with a sparse $w^*$ of level $l^*\in\mathbb{Z}_{d_W+1}$, in the form of \eqref{Sparse_Vector}, must satisfies conditions \eqref{single lambda_general_smooth1} and \eqref{single lambda_general_smooth11}.  Components $w^*_i$ that satisfy equality \eqref{single lambda_general_smooth1} are nonzero and components $w^*_i$ that satisfy inequality \eqref{single lambda_general_smooth11} are zero. Therefore, conditions \eqref{single lambda_general_smooth1} and  \eqref{single lambda_general_smooth11} can be used to develop a strategy for choices of the regularization parameter $\lambda$. To this end,  we define 
\begin{equation}\label{a-sequence}
    a_i(\Theta):=\left|\nabla_{w_i}\mathcal{L}(\Theta)\right|, \ \ \mbox{for}\ \ \Theta:=(w,b), \ \ i\in\mathbb{N}_{d_W}. 
\end{equation}
We evaluate $a_i$ at $\Theta^*$ for all $i\in\mathbb{N}_{d_W}$ and rearrange the sequence $a_i(\Theta^*)$, $i\in\mathbb{N}_{d_W}$ in a nondecreasing order such that 
\begin{equation}\label{Ordering}
    a_{i_1}(\Theta^{*})\leq a_{i_2}(\Theta^*)\leq\cdots \leq a_{i_{d_W}}(\Theta^{*}), \ \  \mbox{with}\ \{i_1, i_2, \dots, i_{d_W}\}=\mathbb{N}_{d_W}.
\end{equation}
If $a_{i_k}(\Theta^*)<\lambda^*$, then for sure we have that $w^*_{i_k}=0$. However, if $\lambda^*=a_{i_k}(\Theta^*)$, then the corresponding component $w^*_{i_k}$ may be zero or nonzero.
The following result is derived from Theorem \ref{prop: single choice_sparsity_general_smooth}.

\begin{theorem}\label{theorem: single vector lambda relation-new}
Let $\mathcal{L}$ be differentiable on $\mathbb{R}^t$. Suppose that $\Theta^{*}:=(w^*,b^*)$ is a local minimizer of problem \eqref{general single lambda model} with $\lambda^*$ and $a_{i}(\Theta^*)$, $i\in\mathbb{N}_{d_W}$, are ordered as in \eqref{Ordering}, where $a_i$ are defined by \eqref{a-sequence}. 

(1) If $w^*$ has sparsity of level $l^*\in\mathbb{Z}_{d_W+1}$, then $\lambda^*$ satisfies
\begin{equation}\label{Order}
    a_{i_1}(\Theta^{*})\leq \cdots \leq a_{i_{{d_W}-l^*}}(\Theta^{*})\leq\lambda^*=a_{i_{d_W-l^*+1}}(\Theta^{*})=\cdots=a_{i_{d_W}}(\Theta^{*}).
\end{equation}

(2) If $w^*$ has sparsity of level $l^*\in\mathbb{Z}_{d_W+1}$, then there exists $l\in\mathbb{Z}_{d_W+1}$ with $l\geq l^*$ such that $\lambda^*$ satisfies 
\begin{equation}\label{Order-new}
    a_{i_1}(\Theta^{*})\leq \cdots \leq a_{i_{{d_W}-l}}(\Theta^{*})<\lambda^*=a_{i_{d_W-l+1}}(\Theta^{*})=\cdots=a_{i_{d_W}}(\Theta^{*}).
\end{equation}

(3) If there exists $l\in\mathbb{Z}_{d_W+1}$ such that $\lambda^*$ satisfies 
inequality \eqref{Order-new}, then $w^*$ has sparsity of level $l^*$ with $l^*\leq l$.
\end{theorem}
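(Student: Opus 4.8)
The plan is to extract from Theorem \ref{prop: single choice_sparsity_general_smooth} a single clean consequence and then read off all three assertions by a sorting-and-counting argument, since $\Theta^*$ is a local minimizer throughout. First I would apply Theorem \ref{prop: single choice_sparsity_general_smooth} to $\Theta^*$ with support $\mathbb{S}_{d_W,l^*}$. Taking absolute values in equation \eqref{single lambda_general_smooth1} and recalling the definition \eqref{a-sequence} of $a_i$, this gives $a_i(\Theta^*)=\lambda^*$ for every index $i$ in the support of $w^*$, while \eqref{single lambda_general_smooth11} gives $a_i(\Theta^*)\leq\lambda^*$ for every index $i$ outside the support. Consequently $\lambda^*$ is the maximum value of the numbers $a_i(\Theta^*)$, $i\in\mathbb{N}_{d_W}$, and this maximum is attained at each of the $l^*$ nonzero coordinates; in particular at least $l^*$ of the $a_i(\Theta^*)$ equal $\lambda^*$.

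For assertion (1), I would use that every $a_i(\Theta^*)\leq\lambda^*$ with equality at (at least) the $l^*$ support indices. After rearranging into the nondecreasing order \eqref{Ordering}, the $l^*$ largest entries $a_{i_{d_W-l^*+1}}(\Theta^*),\dots,a_{i_{d_W}}(\Theta^*)$ must all coincide with the common maximal value $\lambda^*$, because at least $l^*$ coordinates attain $\lambda^*$ and no entry exceeds it; the preceding entry $a_{i_{d_W-l^*}}(\Theta^*)$ is then $\leq\lambda^*$, which is exactly \eqref{Order}. For assertion (2), I would sharpen the cutoff by letting $l$ be the exact number of indices $i$ with $a_i(\Theta^*)=\lambda^*$. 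By the consequence above $l\geq l^*$, and by the definition of $l$ the sorted entries split as $a_{i_{d_W-l+1}}(\Theta^*)=\cdots=a_{i_{d_W}}(\Theta^*)=\lambda^*$ while $a_{i_{d_W-l}}(\Theta^*)<\lambda^*$, giving the strict inequality in \eqref{Order-new}.

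For the converse assertion (3), I would again invoke Theorem \ref{prop: single choice_sparsity_general_smooth}, whose equation \eqref{single lambda_general_smooth1} forces every nonzero coordinate of $w^*$ to satisfy $a_i(\Theta^*)=\lambda^*$. Under hypothesis \eqref{Order-new}, exactly $l$ of the ordered values equal $\lambda^*$ (the top block), while the remaining $d_W-l$ values are strictly smaller. Hence every nonzero coordinate must lie in that top block of size $l$, so the number $l^*$ of nonzero coordinates satisfies $l^*\leq l$.

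I expect the only delicate points to be bookkeeping rather than substance. In (1) one must allow $a_{i_{d_W-l^*}}(\Theta^*)$ to equal $\lambda^*$ as well, which is precisely why (2) is needed to pin down the genuine cutoff; and one must handle the degenerate boundary cases $l^*=0$, $l^*=d_W$, and $l=d_W$, where one of the two blocks in \eqref{Order}--\eqref{Order-new} is empty, so the relevant chain should be read with the convention that a vacuous segment imposes no constraint. The main conceptual content---that $l$ may strictly exceed $l^*$ because a \emph{zero} coordinate can have gradient of magnitude exactly $\lambda^*$---is already captured by the distinction between \eqref{Order} and \eqref{Order-new}.
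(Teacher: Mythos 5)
Your proposal is correct and follows essentially the same route as the paper: both reduce everything to the consequence of Theorem \ref{prop: single choice_sparsity_general_smooth} that $a_i(\Theta^*)=\lambda^*$ on the support of $w^*$ and $a_i(\Theta^*)\leq\lambda^*$ off it, then sort and count, with part (2) obtained by taking $l$ to be the exact number of indices attaining $\lambda^*$ (the paper reaches the same $l$ via an explicit case split on where the strict jump occurs) and part (3) by noting that indices with $a_i(\Theta^*)<\lambda^*$ must correspond to zero components.
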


\begin{proof}
We first prove Item (1). According to Theorem \ref{prop: single choice_sparsity_general_smooth}, if $\Theta^*$ is a local minimizer of problem \eqref{general single lambda model} with a regularization parameter $\lambda^*$ having a sparse $w^*$ with sparsity level $l^*$, in the form of \eqref{Sparse_Vector}, then there are exactly $l^*$ elements of $\{a_i(\Theta^*): i\in\mathbb{N}_{d_W}\}$ equal to $\lambda^*$ and the remaining $d_W-l^*$ elements less than or equal to $\lambda^*$, where $a_i$ are defined by \eqref{a-sequence}. Since $a_i(\Theta^*)$ are ordered as in \eqref{Ordering}, we get the desired inequality \eqref{Order}.
    
We next verify Item (2). As pointed out in Item (1), if $\Theta^{*}:=(w^*,b^*)$ is a local minimizer of problem \eqref{general single lambda model} with $\lambda^*$, where $w^*$ has sparsity of level $l^*\in\mathbb{Z}_{d_W+1}$, then $\lambda^*$ satisfies inequality \eqref{Order}. If $l^*=d_W$ or $l^*<d_W$,  $a_{i_1}(\Theta^{*})= \cdots = a_{i_{{d_W}-l^*}}(\Theta^{*})=\lambda^*$, then inequality \eqref{Order} reduces to $\lambda^*=a_{i_k}(\Theta^{*})$, $k\in\mathbb{N}_{d_W}$. We then get inequality \eqref{Order-new} with $l:=d_W$. Otherwise, we choose $k\in\mathbb{N}_{d_W-l^*}$ such that $a_{i_{k}}(\mathbf{u}^{*})<\lambda^*=a_{ i_{k+1}}(\mathbf{u}^{*})$. By letting $l:=d_W-k$, we rewrite inequality \eqref{Order} as inequality \eqref{Order-new}. It is clear that $l\geq d_W-(d_W-l^*)=l^*$.

It remains to show Item (3). For the case when $l<d_W$, the relation $a_{i_1}(\Theta^{*})\leq \cdots \leq a_{i_{{d_W}-l}}(\Theta^{*})<\lambda^*$, guaranteed by Theorem \ref{prop: single choice_sparsity_general_smooth}, shows that  $w^{*}_{i_k}=0,$ for all $k\in\mathbb{N}_{d_W-l}$. As a result,  $w^{*}$ has at least $d_W-l$ number of zero components. In other words, the number of nonzero components of $w^{*}$ is at most $l$, that is, $w^*$ has sparsity of level $l^*$ with $l^*\leq l$. For the case when $l=d_W,$ it is clear that the sparsity level $l^*$ of $w^*$ satisfies $l^*\leq l$. 
\end{proof}

We comment that  inequality \eqref{Order} of Theorem \ref{theorem: single vector lambda relation-new} is a necessary condition for $w^*$ to have sparsity of level $l^*$, but it may not be a sufficient condition. In Item (2), we modify   inequality \eqref{Order} to inequality  \eqref{Order-new} so that it becomes a sufficient condition as shown in Item (3) of Theorem \ref{theorem: single vector lambda relation-new}.

We next propose an iterative scheme, based on Theorem \ref{theorem: single vector lambda relation-new}, for choice of the regularization parameter $\lambda$ that ensures the resulting neural network achieves a target sparsity level (TSL) $l^*$. According to Theorem \ref{theorem: single vector lambda relation-new}, if problem \eqref{general single lambda model} with the regularization parameter $\lambda^*$ has a local minimizer $\Theta^*$ with $w^*$ having exactly $l^*$ nonzero components, then $\lambda^*$ must satisfy inequality \eqref{Order}. Given a TSL $l^*$, our aim is to find a number $\lambda^*>0$ and a local minimizer $\Theta^*$ of problem \eqref{general single lambda model} with the regularization parameter $\lambda^*$ that satisfy  inequality \eqref{Order}. This will be done by an iteration that alternatively updates the regularization parameter and the local minimizer of the corresponding minimization problem. Specifically, we pick a regularization parameter and solve problem \eqref{general single lambda model} with the regularization parameter to obtain its local minimizer. Then, according to the sparsity level of the computed local minimizer, we alter the regularization parameter and solve problem \eqref{general single lambda model} with the updated regularization parameter. At each step of this iteration, we are required to solve the nonconvex minimization problem \eqref{general single lambda model}. Unlike the regularization parameter choice strategy introduced in \cite{liu2023parameter} where by solving a convex optimization problem at each step, one can obtain a good approximation of its local minimizer using a deterministic convergence guaranteed algorithm, here we are required to solve highly nonconvex minimization problem \eqref{general single lambda model} with a large number of network parameters and large amount of training data. The  nonconvex minimization problem \eqref{general single lambda model} is often solved by stochastic algorithms, which introduce uncertainty. Such uncertainty makes it challenging to approximate a local minimizer accurately. Recent advancements in multi-grade deep learning \cite{xu2023multi,xu2023sal} may help mitigate the difficulties in the training process. Even though, stochastic algorithms are used in solving optimization problems involved in training. %
Influence of uncertainty introduced by stochastic algorithms makes it difficult to match the exact sparsity level as the strategy developed in \cite{liu2023parameter} does for learning problems involving a convex fidelity term. For this reason, we do not require exact match of sparsity levels and instead, we allow them to have a tolerance error. Let $l$ denote the total number of nonzero weight parameters of a neural network. With a given tolerance $\epsilon>0$, we say that the neural network achieves a TSL $l^*$ if 
\begin{equation}\label{layer stop single parameter}
    \left|l-l^*\right|/l^*\leq\epsilon.
\end{equation}
The number $\epsilon$ serves as a stopping criteria for the iterative scheme to be described.

We now describe an iterative scheme for choice of the regularization parameter $\lambda$ that ensures the resulting neural network achieves a TSL $l^*$.
The iteration begins with two initial regularization parameters $0<\lambda^2<\lambda^1$. We pick $\lambda^1$ large enough so that the resulting sparsity level $l^1$ of the weight matrices of the corresponding neural network is smaller than the given TSL $l^*$, and  $\lambda^2$ small enough so that the resulting sparsity level $l^2$ of the weight matrices of the corresponding neural network exceeds $l^*$. Let $\Theta^{2}:=(w^2,b^2)$ denote a local minimizer of problem \eqref{general single lambda model} with $\lambda^2$ and $w^2$ has a sparsity level $l^2$. We then evaluate $a_i$ at $\Theta^2$ for all $i\in\mathbb{N}_{d_W}$ and rearrange the resulting sequence in a nondecreasing order 
\begin{equation}\label{lambda2}
  a_{i_1}(\Theta^{2})\leq a_{i_2}(\Theta^2)\leq\cdots \leq a_{i_{d_W}}(\Theta^{2}).  
\end{equation}
We update the regularization parameter according to \eqref{lambda2}. If there is no element of the sequence $\{a_i^2:=a_i(\Theta^2): i\in\mathbb{N}_{d_W}\}$ belonging to the interval $(\alpha, \beta):=(\lambda^2,\lambda^1)$, we choose $\lambda^3:=(\lambda^1+\lambda^2)/2$. Otherwise, we suppose that 
\begin{equation}\label{lambda2_NEW}
  a_{i_1}^{2}\leq \cdots\leq a_{i_{p-1}}^2\leq\alpha<a_{i_p}^2\leq\cdots \leq a_{i_{p+\mu}}^{2}< \beta\leq a_{i_{p+\mu+1}}^2\leq\cdots\leq a_{i_{d_W}}^{2}  
\end{equation}
and choose $\lambda^3$ as the median of $\{a_{i_p}^2, \dots,  a_{i_{p+\mu}}^{2}\}$. We solve problem \eqref{general single lambda model} with $\lambda^3$ and obtain its local minimizer $\Theta^{3}:=(w^3,b^3)$ with $w^3$ having the sparsity level $l^3$. If $l^3$ satisfies condition \eqref{layer stop single parameter}, then the iteration terminates with the desired parameter and local minimizer. Otherwise, we evaluate $a_i$ at $\Theta^3$ for all $i\in\mathbb{N}_{d_W}$ and rearrange the sequence in a nondecreasing order 
\begin{equation}\label{lambda3}
  a_{j_1}(\Theta^{3})\leq a_{j_2}(\Theta^3)\leq\cdots \leq a_{j_{d_W}}(\Theta^{3}).  
\end{equation}
We update the regularization parameter according to \eqref{lambda3}. If $l^3<l^*$, we set $\alpha:=\lambda^2$, $\beta:=\lambda^3$ and if $l^3>l^*$, we set $\alpha:=\lambda^3$, $\beta:=\lambda^1$. We choose $\lambda^4$ in a way similar to that for $\lambda^3$. If there is no element of the sequence $\{a_i^3:=a_i(\Theta^3): i\in\mathbb{N}_{d_W}\}$ belonging to the interval $(\alpha, \beta)$, we choose $\lambda^4:=(\alpha+\beta)/2$. Otherwise, we suppose that $\alpha$ and $\beta$ satisfy \eqref{lambda2_NEW} with 2 replaced by 3 and the sequence $i_k$, $k\in \mathbb{N}_{d_W}$, replaced by  the sequence $j_k$, $k\in \mathbb{N}_{d_W}$, and we choose  $\lambda^4$ as the median of $\{a^3_{j_p},\dots, a^3_{j_{p+\mu}}\}$.  
We repeat the above procedure to obtain an iterative algorithm, with which we obtain a desirable regularization parameter and a sparse neural network achieving the TSL $l^*$  simultaneously.

We summarize in Algorithm \ref{algo: iterative scheme picking single lambda} the iterative scheme for choice of the regularization parameter. Numerical experiments to be presented in
section 5 demonstrate that Algorithm \ref{algo: iterative scheme picking single lambda} can effectively choose a regularization parameter that leads to a local minimizer of problem \eqref{general single lambda model} with weight matrices having desired sparsity level and satisfactory approximation accuracy. 

\begin{algorithm}
  \caption{Iterative scheme selecting single regularization parameter for model \eqref{general single lambda model}}
  
  \label{algo: iterative scheme picking single lambda}
  
  \KwInput{$\mathcal{L}$, $l^*$, $\epsilon$}
  
  \KwInitialization{Choose $\lambda^1$ large enough that guarantees $l^1<l^*$. Choose $\lambda^2$ small enough that guarantees $l^2> l^*$.}
  \For{$i = 2,3,\ldots$}
  {Solve \eqref{general single lambda model} with $\lambda^i$ and get the corresponding numerical solution $\Theta^i=(w^i,b^i)$. Let $l^{i}$ be the sparsity level of $w^i$. \\
  \If{$|l^i-l^*|/l^*\leq\epsilon$} {
    \textbf{break}  %
  }
  
  Compute $a^i:=\left|\nabla_{w^i}\mathcal{L}(\Theta^i)\right|$.\\
  Obtain $j_1$ such that $l^{j_{1}}=\max\{l^j: l^j\leq l^*, j\in\mathbb{N}_i\}$.\\
  Obtain $j_2$ such that 
  $l^{j_{2}}=\min\{l^j: l^*\leq l^j, j\in\mathbb{N}_i\}$.\\
  \uIf{\text{there exists $j$ such that} $\lambda_k^{j_2}<(a^i)_j<\lambda_k^{j_1}$}{
  Update $\lambda^{i+1}$ as the median of $\left\{(a^i)_j:\lambda^{j_2}<(a^i)_j<\lambda^{j_1}\right\}$.}
  \Else{Update $\lambda^{i+1}$ as $(\lambda^{j_2}+\lambda^{j_1})/2$.}
  
  }

  \KwOutput{$\Theta^i$, $\lambda^i$.}  
\end{algorithm}

\section{Multi-Parameter Regularization} \label{section: multi parameter model}
In this section, we explore multi-parameter $\ell_1$-norm regularization for training sparse DNNs. Specifically, we impose different regularization parameters to weights of different layers. This approach is intuitive for DNNs since each layer captures different types of physical information. We derive such regularization models for regression and classification from the Bayes perspective and characterize how the regularization parameter in a layer influences the sparsity of the weight matrix of the layer. Based on the characterization, we then develop an iterative scheme for selecting multiple regularization parameters with which the weight matrix of each layer enjoys a prescribed sparsity level.

We begin with extending the idea of Proposition \ref{prop: square loss single lambda} to derive a multi-parameter regularization model for regression problem from the Bayes Theorem. Unlike in the single-parameter  regularization model where all the weight parameters follow the Laplace distribution with the same scale parameter $s>0$, we allow the weight parameters of different layers in the next model to follow the Laplace
distribution with possibly different scale parameters. As shown in the next proposition, this will lead to a multi-parameter regularization model.

\begin{proposition}\label{theorem: multi lambda regression}
    Suppose that $\left\{\left(x^i,y^i\right)\in\mathbb{R}^p\times\mathbb{R}^q: i\in\mathbb{N}_N\right\}$ is a given dataset and the labels $y^i$ are the observed values of random variables $\tilde{y}^i$ defined by equation \eqref{epsilon j tilde} with $\epsilon_j^i$, $(i,j)\in\mathbb{N}_N\times\mathbb{N}_q$, independently following $\mathrm{Gaussian} (0,v^2)$ with $v>0$ and DNNs $\mathcal{N}_{\Theta}$ satisfy equation \eqref{epsilon j} for parameters $\Theta:=(w^k,b^k)_{k\in\mathbb{N}_D}$ with $w^k\in\mathbb{R}^{d_k}$, $b^k\in\mathbb{R}^{n_k}$, $k\in\mathbb{N}_D$. If for each $k\in\mathbb{N}_D$, $w^k_j$, $j\in\mathbb{N}_{d_k}$ independently follow $\mathrm{Laplace}(0,s_k)$ with $s_k>0$, and $b^k_j$, $j\in\mathbb{N}_{n_k}$ independently follow $\mathrm{Uniform}(-M,M)$ with $M>0$, then the MAP estimate $\Theta^*$ is a solution of the following optimization problem
    \begin{equation}\label{multi lambda model regression}
    \min\left\{\sum_{i\in\mathbb{N}_N}\left\|\mathcal{N}_{\Theta}(x^i)-y^i\right\|_2^2 + \sum_{k\in\mathbb{N}_D}\lambda_k\|w^k\|_1:\Theta:=(w^k,b^k)_{k\in\mathbb{N}_D}\in\mathbb{R}^t \right\}
    \end{equation}
    with $\lambda_k:=2v^2/s_k$, $k\in\mathbb{N}_D.$
\end{proposition}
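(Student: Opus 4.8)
The plan is to follow the same Bayesian MAP route used in the proof of Proposition~\ref{prop: square loss single lambda}, adapting only the computation of $\log(p(\Theta))$ to accommodate the layer-dependent scale parameters $s_k$. As established in equation \eqref{optimization problem two ln}, the MAP estimate $\Theta^*$ is a maximizer of $\log(p(\widetilde{Y}=Y|\Theta))+\log(p(\Theta))$, so it suffices to evaluate both log-terms and then algebraically recast the resulting maximization as the minimization problem \eqref{multi lambda model regression}.

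First I would observe that the Gaussian noise hypothesis here is identical to that of Proposition~\ref{prop: square loss single lambda}, so the likelihood term $\log(p(\widetilde{Y}=Y|\Theta))$ is unchanged and is given verbatim by equation \eqref{ln p(y|Theta)}, namely $C_1-\frac{1}{2v^2}\sum_{i\in\mathbb{N}_N}\|\mathcal{N}_{\Theta}(x^i)-y^i\|_2^2$ with $C_1$ independent of $\Theta$. No new work is needed for this piece.

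The only genuine computation is $\log(p(\Theta))$. Using the independence factorization \eqref{w and b are independent}, I would insert the uniform density \eqref{pdf of Uniform} for each bias $b^k_j$ and, crucially, the Laplace density \eqref{pdf of Laplace} with scale $s_k$ (rather than a common $s$) for each weight $w^k_j$ in layer $k$. This yields
$$
p(\Theta)=\prod_{k\in\mathbb{N}_D}\frac{1}{(2M)^{n_k}}\prod_{j\in\mathbb{N}_{d_k}}\frac{1}{2s_k}\exp\left(-\frac{|w^k_j|}{s_k}\right).
$$
Taking logarithms and collecting the $\Theta$-independent constants $-n_k\log(2M)$ and $-d_k\log(2s_k)$ into a single constant $C$, I would use $\sum_{j\in\mathbb{N}_{d_k}}|w^k_j|=\|w^k\|_1$ to obtain
$$
\log(p(\Theta))=C-\sum_{k\in\mathbb{N}_D}\frac{1}{s_k}\|w^k\|_1.
$$
The essential difference from the single-parameter case is simply that the factor $1/s$ no longer factors out of the layer sum; each layer retains its own coefficient $1/s_k$.

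Finally, substituting the two log-terms into \eqref{optimization problem two ln} and discarding the additive constants $C_1$ and $C$ gives a maximization of $-\frac{1}{2v^2}\sum_{i\in\mathbb{N}_N}\|\mathcal{N}_{\Theta}(x^i)-y^i\|_2^2-\sum_{k\in\mathbb{N}_D}\frac{1}{s_k}\|w^k\|_1$; multiplying through by $-2v^2$ and setting $\lambda_k:=2v^2/s_k$ converts this into exactly \eqref{multi lambda model regression}. I do not anticipate any real obstacle here---the argument is a bookkeeping variant of the single-parameter proof---so the only point demanding care is tracking that the scale parameters are now indexed by the layer $k$, ensuring each $\|w^k\|_1$ is paired with its correct coefficient $1/s_k$, and noting that $M$ is again absorbed into the constant and thus plays no role in the final model.
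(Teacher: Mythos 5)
Your proposal is correct and follows essentially the same argument as the paper's proof: reuse the Gaussian likelihood computation \eqref{ln p(y|Theta)}, recompute $\log(p(\Theta))$ with the layer-dependent Laplace scales $s_k$ to get $C-\sum_{k\in\mathbb{N}_D}\frac{1}{s_k}\|w^k\|_1$, then substitute into \eqref{optimization problem two ln}, multiply by $-2v^2$, and set $\lambda_k:=2v^2/s_k$. The only cosmetic difference is the labeling of the constant ($C$ versus the paper's $C_4$), which is immaterial.
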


\begin{proof} We prove this proposition by modifying the proof of Proposition  \ref{prop: square loss single lambda}.
Let $Y:=[y^i_j:i\in\mathbb{N}_N,j\in\mathbb{N}_q]\in\mathbb{R}^{q\times N}$ and consider the random variable $\widetilde{Y}:=[\widetilde{y}^i_j:i\in\mathbb{N}_N,j\in\mathbb{N}_q]\in\mathbb{R}^{q\times N}$. It is known that the MAP estimate $\Theta^*$ is a solution of problem \eqref{optimization problem two ln}. As shown in the proof of Proposition \ref{prop: square loss single lambda with bias regularized}, the probability $\log(p(\widetilde{Y}=Y|\Theta))$ has the form \eqref{ln p(y|Theta)}. It suffices to compute the probability $\log(p(\Theta))$ with the form \eqref{w and b are independent}. Note that for each $k\in\mathbb{N}_D$, $w^k_j$, $j\in\mathbb{N}_{d_k}$ independently follow $\mathrm{Laplace}(0,s_k)$ with $s_k>0$, and $b^k_j$, $j\in\mathbb{N}_{n_k}$ independently follow $\mathrm{Uniform}(-M,M)$ with $M>0$. As a result, we obtain by the probability density functions \eqref{pdf of Laplace} and \eqref{pdf of Uniform} of Laplace and uniform distributions, respectively, that 
\begin{equation*}
    \log(p(\Theta))=\log\left(\prod_{k\in\mathbb{N}_D}\frac{1}{(2M)^{n_k}}\left(\prod_{j\in\mathbb{N}_{d_k}}\frac{1}{2s_k}\mathrm{exp}\left(-\frac{|w^k_j|}{s_k}\right)\right)\right),
\end{equation*}
which further leads to 
\begin{equation}\label{multi ln p Theta}
\log(p(\Theta))=C_4 - \sum_{k\in\mathbb{N}_D} \frac{1}{s_k} \|w^k\|_1,
\end{equation}
where 
$$
C_4:=-\log(2M)\sum_{k\in\mathbb{N}_D} n_k - \sum_{k\in\mathbb{N}_D} d_k\log(2s_k).
$$
Note that $C_1$ in \eqref{ln p(y|Theta)} and $C_4$ defined as above are both constants
independent of $\Theta$. Substituting equations \eqref{ln p(y|Theta)} and \eqref{multi ln p Theta} into
the object function of problem \eqref{optimization problem two ln}, we get that $\Theta^*$ is a solution of the optimization problem \begin{equation}\label{proof argmax equiv}
    \max\left\{-\frac{1}{2v^2}\sum_{i\in\mathbb{N}_N}\left\|\mathcal{N}_{\Theta}(x^i)-y^i\right\|_2^2- \sum_{k\in\mathbb{N}_D}\frac{1}{s_k}\|w^k\|_1:\Theta:=(w^k,b^k)_{k\in\mathbb{N}_D}\in\mathbb{R}^t\right\}. 
\end{equation}
 By multiplying $-2v^2$ on each term of the object function of problem  \eqref{proof argmax equiv} and setting $\lambda_k:=2v^2/s_k$, $k\in\mathbb{N}_D$, we see that  problem \eqref{proof argmax equiv} is equivalent to optimization problem \eqref{multi lambda model regression}. This completes the proof of this proposition. 
\end{proof}

Unlike the regularization term of optimization problem \eqref{single lambda regression model}, which penalizes the weights of all layers together with one single parameter, that of optimization problem \eqref{multi lambda model regression} involves $D$ parameters $\lambda_j$, $j\in\mathbb{N}_D$, penalizing the weights of different layers with different regularization parameters.

The next result pertains to the multi-parameter regularization model for classification problem. Similarly to Proposition \ref{theorem: multi lambda regression}, we impose  prior Laplace distribution with distinct scale parameters on the weight parameters of various layers.

\begin{proposition}\label{theorem: multi lambda classification}
    Let $K\in\mathbb{N}$. Suppose that $\left\{\left(x^i,y^i\right)\in\mathbb{R}^p\times\mathbb{N}_K: i\in\mathbb{N}_N\right\}$ is a given dataset and the labels $y^i$ are the observed values of random variables $\tilde{y}^i$ defined by equation \eqref{classification tilde yi true} and DNNs $\mathcal{N}_{\Theta}$ satisfy equation \eqref{classification tilde yi given} for parameters $\Theta:=(w^k,b^k)_{k\in\mathbb{N}_D}$ with $w^k\in\mathbb{R}^{d_k}$, $b^k\in\mathbb{R}^{n_k}$, $k\in\mathbb{N}_D$. If for each $k\in\mathbb{N}_D$, $w^k_j$, $j\in\mathbb{N}_{d_k}$ independently follow $\mathrm{Laplace}(0,s_k)$ with $s_k>0$, and $b^k_j$, $j\in\mathbb{N}_{n_k}$ independently follow $\mathrm{Uniform}(-M,M)$ with $M>0$, then the MAP estimate $\Theta^*$ is a solution of the following optimization problem
    \begin{equation}\label{multi lambda model classification}
    \min\left\{-\sum_{i\in\mathbb{N}_N} \log\left(\left(\mathcal{S}(\mathcal{N}_{\Theta}(x^i))\right)_{y^i}\right) + \sum_{k\in\mathbb{N}_D}\lambda_k\|w^k\|_1:\Theta:=(w^k,b^k)_{k\in\mathbb{N}_D} \in\mathbb{R}^t\right\}
    \end{equation}
    with $\lambda_k:=1/s_k$, $k\in\mathbb{N}_D$. 
\end{proposition}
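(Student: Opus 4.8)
The plan is to combine the two computations already performed in the proofs of Propositions \ref{prop: single lambda classification} and \ref{theorem: multi lambda regression}, since this proposition simply pairs the categorical-classification likelihood of the former with the multi-parameter Laplace prior of the latter. As in every MAP derivation of this section, I would begin from the fact that the MAP estimate $\Theta^*$ solves problem \eqref{optimization problem two ln}, so that it suffices to evaluate the two terms $\log(p(\widetilde{Y}=Y|\Theta))$ and $\log(p(\Theta))$ under the present hypotheses and then simplify the resulting objective function.

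For the likelihood term, the assumptions on the labels are identical to those of Proposition \ref{prop: single lambda classification}: the random variables $\widetilde{y}^i$, $i\in\mathbb{N}_N$, are independent and satisfy \eqref{classification tilde yi given}. First I would factor $p(\widetilde{Y}=Y|\Theta)=\prod_{i\in\mathbb{N}_N} p(\widetilde{y}^i=y^i|\Theta)$ by independence, then substitute the probability mass function \eqref{pmf of cat} of the categorical distribution with parameter $\mathcal{S}(\mathcal{N}_\Theta(x^i))$, and take logarithms. This reproduces equation \eqref{proof log p classification} verbatim, namely $\log(p(\widetilde{Y}=Y|\Theta))=\sum_{i\in\mathbb{N}_N}\log\left(\left(\mathcal{S}(\mathcal{N}_\Theta(x^i))\right)_{y^i}\right)$, with no additive constant and, crucially, no noise-variance prefactor.

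For the prior term, the assumptions on $w$ and $b$ are exactly those of Proposition \ref{theorem: multi lambda regression}: the weights of layer $k$ independently follow $\mathrm{Laplace}(0,s_k)$ with possibly distinct scales $s_k$, while the biases independently follow $\mathrm{Uniform}(-M,M)$. I would therefore reuse the independence factorization \eqref{w and b are independent} together with the density functions \eqref{pdf of Laplace} and \eqref{pdf of Uniform} to obtain $\log(p(\Theta))=C_4-\sum_{k\in\mathbb{N}_D}\frac{1}{s_k}\|w^k\|_1$, exactly as in \eqref{multi ln p Theta}, where the constant $C_4$ absorbs the $M$- and $s_k$-dependent normalizing factors and is independent of $\Theta$.

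The final step is bookkeeping: substituting these two expressions into the objective of \eqref{optimization problem two ln} and discarding the constant $C_4$ recasts the MAP problem as $\max\left\{\sum_{i\in\mathbb{N}_N}\log\left(\left(\mathcal{S}(\mathcal{N}_\Theta(x^i))\right)_{y^i}\right)-\sum_{k\in\mathbb{N}_D}\frac{1}{s_k}\|w^k\|_1\right\}$. Negating the objective converts this maximization into the minimization \eqref{multi lambda model classification}, and identifying $\lambda_k:=1/s_k$ completes the argument. There is no serious obstacle here; the only point worth flagging is the contrast with the regression model of Proposition \ref{theorem: multi lambda regression}. Because the categorical likelihood carries no $1/(2v^2)$ factor, one rescales the objective merely by $-1$ rather than by $-2v^2$, which is precisely why the regularization parameters emerge as $\lambda_k=1/s_k$ instead of $2v^2/s_k$.
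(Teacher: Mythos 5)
Your proposal is correct and follows essentially the same route as the paper's own proof: both reuse the likelihood computation \eqref{proof log p classification} from Proposition \ref{prop: single lambda classification} and the prior computation \eqref{multi ln p Theta} from Proposition \ref{theorem: multi lambda regression}, substitute them into \eqref{optimization problem two ln}, discard the constant $C_4$, and negate to obtain \eqref{multi lambda model classification} with $\lambda_k:=1/s_k$. Your closing remark explaining why no $2v^2$ factor appears here, in contrast to the regression case, is a correct and helpful clarification, though the paper does not spell it out.
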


\begin{proof} We prove this proposition by computing the two probabilities involved in maximization problem \eqref{optimization problem two ln}.
As shown in section 2, the MAP estimate $\Theta^*$ is the solution of maximization problem \eqref{optimization problem two ln} with $Y:=[y^i:i\in\mathbb{N}_N]\in\mathbb{N}_K^{N}$ and the random variable $\widetilde{Y}:=[\widetilde{y}^i:i\in\mathbb{N}_N]\in\mathbb{N}_K^{N}$. Under the assumptions of this proposition, the probabilities   $\log(p(\widetilde{Y}=Y|\Theta))$ and $\log(p(\Theta))$ constituting the object function of problem \eqref{optimization problem two ln} have been computed in the proofs of Propositions \ref{prop: single lambda classification} and \ref{theorem: multi lambda regression}, respectively. That is, they are given by \eqref{proof log p classification} and \eqref{multi ln p Theta}, respectively. By plugging these two representations into the object function of problem \eqref{optimization problem two ln}, with noting that $C_4$ in equation \eqref{multi ln p Theta} is a constant independent of $\Theta$, we may reformulate maximization problem \eqref{optimization problem two ln} as minimization problem \eqref{multi lambda model classification} with $\lambda_k:=1/s_k$, $k\in\mathbb{N}_D$. 
\end{proof}

We remark that in models \eqref{multi lambda model regression} and \eqref{multi lambda model classification}, the regularization parameter $\lambda_k$ for the $k$-th layer may vary as $k$ changes.

We now turn to studying
choices of multiple regularization parameters that guarantee desired sparsity levels of the regularized solutions. To this end, we consider a general multi-parameter regularization model, which covers models \eqref{multi lambda model regression} and \eqref{multi lambda model classification} as special cases. Suppose that $\mathcal{L}:\mathbb{R}^t\to\mathbb{R}$ is a general loss function of variable $\Theta\in\mathbb{R}^t$. The multi-parameter regularization model is formulated as 
\begin{equation}\label{general multi lambda model}
    \min\left\{\mathcal{L}(\Theta) + \sum_{k\in\mathbb{N}_D}\lambda_k\|w^k\|_1:\Theta:=(w^k,b^k)_{k\in\mathbb{N}_D}\in\mathbb{R}^t \right\},
\end{equation}
where $\lambda_k>0$, $k\in\mathbb{N}_D$, are a group of regularization parameters.

The next result elucidates the relationship between a set of regularization parameters and the sparsity level
of a local minimizer of problem \eqref{general multi lambda model}. We recall that for the single-parameter regularization problem, Theorem \ref{prop: single choice_sparsity_general_smooth} shows how the single regularization parameter $\lambda$ determines the sparsity level of the weight matrices. The multiple regularization parameters allow us to consider the sparsity level of the weight matrix of each layer separately. 

\begin{theorem}\label{prop: multi choice_sparsity_general_smooth}
Suppose that $\mathcal{L}$ is differentiable on $\mathbb{R}^t$. If problem \eqref{general multi lambda model} with $\lambda_k>0$, $k\in\mathbb{N}_D$ has a local minimizer $\Theta^{*}:=\left(w^{k*},b^{k*}\right)_{k\in\mathbb{N}_D}$ with for each $k\in\mathbb{N}_D$,  $w^{k*}:=\sum_{i\in\mathbb{S}_{d_k,l_k^*}}w^{k*}_{i}{e}_{i}\in \Omega_{d_k,l_k^*}$ for some $l_k^*\in\mathbb{Z}_{d_k+1}$, then for each $k\in\mathbb{N}_D$
\begin{equation}\label{multi lambda_general_smooth1}
\lambda_k=-\nabla_{w^k_{i}}\mathcal{L}(\Theta^*)\mathrm{sign}\left(w^{k*}_{i}\right), \ \ i\in \mathbb{S}_{d_k,l_k^*},
\end{equation}
and
\begin{equation}\label{multi lambda_general_smooth11}
    \lambda_k\geq\left|\nabla_{w^k_{i}}\mathcal{L}(\Theta^*)\right|, \ \ i\in \mathbb{N}_{d_k}\setminus\mathbb{S}_{d_k,l_k^*}.
\end{equation}
\end{theorem}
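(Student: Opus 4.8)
The plan is to follow precisely the structure already established in the proof of Theorem \ref{prop: single choice_sparsity_general_smooth}, treating the multi-parameter case as a layerwise decoupling of that argument. Since the only structural difference between problem \eqref{general single lambda model} and problem \eqref{general multi lambda model} is that the single regularizer $\lambda\|w\|_1$ is replaced by the separable sum $\sum_{k\in\mathbb{N}_D}\lambda_k\|w^k\|_1$, and since the regularizer still depends only on the weight variables $w^k$ and not on the biases $b^k$, essentially the entire apparatus carries over. First I would set $\mathcal{R}(\Theta):=\sum_{k\in\mathbb{N}_D}\lambda_k\|w^k\|_1$ and rewrite problem \eqref{general multi lambda model} as the minimization of $\mathcal{L}+\mathcal{R}$ over $\mathbb{R}^t$. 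This $\mathcal{R}$ is convex as a nonnegative combination of norms, so its general subdifferential coincides with the classical convex subdifferential.

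Next I would apply the generalized Fermat rule: at the local minimizer $\Theta^*$ we have $0\in\partial(\mathcal{L}+\mathcal{R})(\Theta^*)$. Because $\mathcal{L}$ is differentiable, the sum rule for the general subdifferential (stated in the excerpt) gives $0\in\nabla\mathcal{L}(\Theta^*)+\partial\mathcal{R}(\Theta^*)$. The crucial observation is that $\mathcal{R}$ is separable across layers and independent of the biases, so its subdifferential factorizes: the component of $\partial\mathcal{R}(\Theta^*)$ in the $w^k$-block is exactly $\lambda_k\,\partial\|\cdot\|_1(w^{k*})$, and the bias components contribute nothing. Projecting the inclusion onto the $w^k$-block for each fixed $k\in\mathbb{N}_D$ therefore yields the layerwise relation
\begin{equation*}
-\nabla_{w^k}\mathcal{L}(\Theta^*)\in\lambda_k\,\partial\|\cdot\|_1(w^{k*}).
\end{equation*}

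From here the argument is identical to the single-parameter case applied within each layer. Using the hypothesis $w^{k*}=\sum_{i\in\mathbb{S}_{d_k,l_k^*}}w^{k*}_i e_i\in\Omega_{d_k,l_k^*}$, I would invoke the explicit description of the $\ell_1$ subdifferential at a sparse point (the exact analogue of \eqref{single subdifferential of l1 norm}), namely that $z\in\partial\|\cdot\|_1(w^{k*})$ iff $z_i=\mathrm{sign}(w^{k*}_i)$ for $i\in\mathbb{S}_{d_k,l_k^*}$ and $|z_i|\le 1$ for $i\in\mathbb{N}_{d_k}\setminus\mathbb{S}_{d_k,l_k^*}$. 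Matching the two coordinate regimes in the inclusion above gives $-\nabla_{w^k_i}\mathcal{L}(\Theta^*)=\lambda_k\,\mathrm{sign}(w^{k*}_i)$ for the support indices, which rearranges to \eqref{multi lambda_general_smooth1}, and $|\nabla_{w^k_i}\mathcal{L}(\Theta^*)|\le\lambda_k$ off the support, which is \eqref{multi lambda_general_smooth11}.

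I do not anticipate a genuine obstacle here, since the result is a direct generalization and no new analytic difficulty arises. The one point deserving care — and the only place where multiplicity matters — is justifying the layerwise decomposition of $\partial\mathcal{R}(\Theta^*)$: I would want to state explicitly that for a separable convex function the subdifferential is the Cartesian product of the subdifferentials of the summands, so that the inclusion $0\in\nabla\mathcal{L}(\Theta^*)+\partial\mathcal{R}(\Theta^*)$ can legitimately be read off one block at a time with the layer-specific parameter $\lambda_k$ attached to the correct block. Once that separability is invoked, each layer is handled in isolation exactly as in Theorem \ref{prop: single choice_sparsity_general_smooth}, and the conclusion follows for every $k\in\mathbb{N}_D$.
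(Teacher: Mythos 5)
Your proposal is correct and follows essentially the same route as the paper's own proof: define the separable regularizer $\mathcal{R}_{\Lambda}$, apply the generalized Fermat rule and the sum rule for the differentiable $\mathcal{L}$, read the resulting inclusion blockwise to get $-\nabla_{w^k}\mathcal{L}(\Theta^*)\in\lambda_k\partial\|\cdot\|_1(w^{k*})$ for each $k$, and finish with the explicit description of the $\ell_1$ subdifferential at a sparse point. Your explicit remark that the subdifferential of a separable convex function is the Cartesian product of the blockwise subdifferentials is exactly the step the paper performs implicitly ("According to the definition of $\mathcal{R}_{\Lambda}$, we rewrite\ldots"), so there is no substantive difference.
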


\begin{proof}
We prove this theorem by employing arguments similar to those used in the proof of Theorem \ref{prop: single choice_sparsity_general_smooth}. We denote the regularization term in problem \eqref{general multi lambda model} by  
$$
\mathcal{R}_{\Lambda}(\Theta):=\sum_{k\in\mathbb{N}_D}\lambda_k\|w^k\|_1, \ \ \mbox{for}\ \ \Theta:=(w^k,b^k)_{k\in\mathbb{N}_D}\in\mathbb{R}^t.
$$
The generalized Fermat's rule ensures that if $\Theta^*$ is a local minimizer of problem \eqref{general multi lambda model}, then we get that 
\begin{equation*}\label{Fermat's_Rule}
    0\in\partial (\mathcal{L}+\mathcal{R}_{\Lambda})(\Theta^*),
\end{equation*}
which together with the differentiability of $\mathcal{L}$ leads to  
\begin{equation}\label{Fermat's_Rule:Conponents}
0\in\nabla\mathcal{L}(\Theta^*)+\partial\mathcal{R}_{\Lambda}(\Theta^*).
\end{equation}
Note that $\partial\mathcal{R}_{\Lambda}(\Theta^*)$ is the classical subdifferential due to the convexity of function $\mathcal{R}_{\Lambda}$. According to the definition of $\mathcal{R}_{\Lambda}$, we rewrite inclusion relation \eqref{Fermat's_Rule:Conponents} as
\begin{equation*}
0\in\nabla_{w^k}\mathcal{L}(\Theta^*)+ \lambda_k\partial\|\cdot\|_1(w^{k*}),\ k\in\mathbb{N}_D,
\end{equation*}
which is equivalent to 
\begin{equation}\label{0 inclusion relation}
-\nabla_{w^k}\mathcal{L}(\Theta^*)\in\lambda_k\partial\|\cdot\|_1(w^{k*}),\ k\in\mathbb{N}_D.
\end{equation} 
For each $k\in\mathbb{N}_D$, by noting that 
\begin{equation*}\label{representation_w^{k*}} w^{k*}=\sum_{i\in\mathbb{S}_{d_k,l_k^*}} w^{k*}_{i}{e}_{i}\in \Omega_{d_k,l_k^*}\ \  \mbox{with}\  \ w^{k*}_{i}\in\mathbb{R}\setminus\{0\},\ \ i\in\mathbb{S}_{d_k,l_k^*},
\end{equation*}
we obtain that 
\begin{equation}\label{subdifferential of l1 norm}
\partial\|\cdot\|_1\left(w^{k*}\right)
=\left\{z\in\mathbb{R}^{d_k}: z_{i}:=\mathrm{sign}(w^{k*}_{i}),i\in\mathbb{S}_{d_k,l_k^*}, 
|z_i|\leq 1, i\in\mathbb{N}_{d_k}\setminus\mathbb{S}_{d_k,l_k^*}\right\}.
\end{equation}
Substituting equation  \eqref{subdifferential of l1 norm} into the right-hand side of inclusion relation  \eqref{0 inclusion relation}, we get that $-\nabla_{w^k_i}\mathcal{L}(\Theta^*)=\lambda_k\mathrm{sign}(w^{k*}_{i})$ for all $i\in\mathbb{S}_{d_k,l_k^*}$, 
which is equivalent to equation \eqref{multi lambda_general_smooth1} and $\left|\nabla_{w^k_{i}}\mathcal{L}(\Theta^*)\right|\leq\lambda_k$ for all $i\in \mathbb{N}_{d_k}\setminus\mathbb{S}_{d_k,l_k^*}$, which coincides with inclusion relation \eqref{multi lambda_general_smooth11}. 
\end{proof}

Theorem \ref{prop: multi choice_sparsity_general_smooth} characterizes the sparsity level of the weight matrix of a layer in terms of the regularization parameter in the layer. That is, if $\Theta^{*}:=\left(w^{k*},b^{k*}\right)_{k\in\mathbb{N}_D}$ is a local minimizer of problem \eqref{general multi lambda model} with $\lambda_k>0$, $k\in\mathbb{N}_D$ and for each $k\in\mathbb{N}_D$,  $w^{k*}$ has sparsity of level $l_k^*\in\mathbb{Z}_{d_k+1}$, then each regularization parameter $\lambda_k$ satisfies equality \eqref{multi lambda_general_smooth1} and inequality \eqref{multi lambda_general_smooth11}. Based upon this characterization, we develop an iterative scheme for choosing parameter $\lambda_k>0$, $k\in\mathbb{N}_D$, with which a local minimizer of problem \eqref{general multi lambda model} has a prescribed sparsity level in each layer. For this purpose, we introduce a sequence for each layer according to equality \eqref{multi lambda_general_smooth1} and inequality \eqref{multi lambda_general_smooth11}.
For
each $k\in\mathbb{N}_D$, we set \begin{equation}\label{aik}
a^k_i(\Theta):=\left|\nabla_{w^k_i}\mathcal{L}(\Theta)\right|, \ \mbox{for}\ \Theta:=(w^k,b^k)_{k\in\mathbb{N}_D},\ i\in\mathbb{N}_{d_k}.
\end{equation}
We evaluate $a^k_i$ at $\Theta^*$ 
for all $i\in\mathbb{N}_{d_k}$ and rearrange the sequence  $a^k_i(\Theta^*)$, $i\in\mathbb{N}_{d_k}$, in a nondecreasing order: 
\begin{equation}\label{nondecreasing order}
a^k_{i_1}(\Theta^{*})\leq\cdots \leq a^k_{i_{d_k}}(\Theta^{*}),\ \mbox{with}\ \{i_1,i_2,\ldots,i_{d_k}\}=\mathbb{N}_{d_k}.
\end{equation}
In a way similar to the single-parameter regularization model, these sequences so defined will be used to choose the regularization parameters in each step of the iterative scheme. The following result may be proved by arguments similar to those used in the proof of Theorem \ref{theorem: single vector lambda relation-new}, and thus details of the proof are omitted.

\begin{theorem}\label{theorem: multi vector lambda relation-new}
   Let $\mathcal{L}$ be differentiable on $\mathbb{R}^t$.  Suppose that $\Theta^{*}:=\left(w^{k*},b^{k*}\right)_{k\in\mathbb{N}_D}$ is a local minimizer of problem \eqref{general multi lambda model} with $\lambda_k^*>0$, $k\in\mathbb{N}_D$, and for each $k\in\mathbb{N}_D$,  $a^{k}_{i}(\Theta^*)$, $i\in\mathbb{N}_{d_k}$, are ordered as in \eqref{nondecreasing order}, where $a^{k}_{i}$ are defined by \eqref{aik}.
   
   (1) If for each $k\in\mathbb{N}_D$, $w^{k*}$ has sparsity of level  $l_k^*\in\mathbb{Z}_{d_k+1}$, then 
   for each $k\in\mathbb{N}_D$, $\lambda_k^*$ satisfies 
\begin{equation*}\label{multi Order}
    a^k_{i_1}(\Theta^{*})\leq \cdots \leq a^k_{i_{{d_k}-l_k^*}}(\Theta^{*})\leq\lambda_k^*=a^k_{i_{{d_k}-l_k^*+1}}(\Theta^{*})=\cdots=a^k_{i_{d_k}}(\Theta^{*}).
\end{equation*} 
   
   (2) If for each $k\in\mathbb{N}_D$, $w^{k*}$ has sparsity of level  $l_k^*\in\mathbb{Z}_{d_k+1}$, then for each $k\in\mathbb{N}_D$, there exists $l_k\in\mathbb{Z}_{d_k+1}$ with $l_k\geq l_k^*$ such that   $\lambda_k^*$ satisfies 
\begin{equation}\label{multi Order-new}
    a^k_{i_1}(\Theta^{*})\leq \cdots \leq a^k_{i_{{d_k}-l_k}}(\Theta^{*})<\lambda_k^*=a^k_{i_{{d_k}-l_k+1}}(\Theta^{*})=\cdots=a^k_{i_{d_k}}(\Theta^{*}).
\end{equation}  

(3) If for each $k\in\mathbb{N}_D$, there exists $l_k\in\mathbb{Z}_{d_k+1}$
such that $\lambda_k^*$ satisfies
inequality \eqref{multi Order-new}, then for each $k\in\mathbb{N}_D$, $w^{k*}$ has sparsity of level $l_k^*\leq l_k$.
\end{theorem}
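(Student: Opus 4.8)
The plan is to reduce the multi-parameter statement to $D$ independent instances of the single-parameter Theorem \ref{theorem: single vector lambda relation-new}, one for each layer index $k\in\mathbb{N}_D$. The structural fact that makes this work is that both the regularizer $\mathcal{R}_{\Lambda}$ and the subdifferential characterization already established in Theorem \ref{prop: multi choice_sparsity_general_smooth} are separable across layers. Indeed, the generalized Fermat rule applied to a local minimizer of problem \eqref{general multi lambda model} yields the inclusion \eqref{0 inclusion relation}, which is a collection of $D$ decoupled relations, one per layer. Consequently, for each fixed $k$ the pair $(\lambda_k^*,w^{k*})$ plays exactly the role that $(\lambda^*,w^*)$ plays in the single-parameter setting, with $d_W$ replaced by $d_k$ and the sequence $a_i$ replaced by $a_i^k$ defined in \eqref{aik}.

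First I would translate the conclusions of Theorem \ref{prop: multi choice_sparsity_general_smooth} into statements about the magnitudes $a^k_i(\Theta^*)=|\nabla_{w^k_i}\mathcal{L}(\Theta^*)|$. Taking absolute values in \eqref{multi lambda_general_smooth1} shows that $a^k_i(\Theta^*)=\lambda_k^*$ for every index $i$ in the support $\mathbb{S}_{d_k,l_k^*}$, while \eqref{multi lambda_general_smooth11} gives $a^k_i(\Theta^*)\leq\lambda_k^*$ for every index outside the support. Hence, within each layer, exactly $l_k^*$ of the quantities $a^k_i(\Theta^*)$ equal $\lambda_k^*$ and the remaining $d_k-l_k^*$ are no larger than $\lambda_k^*$. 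After sorting as in \eqref{nondecreasing order}, this immediately produces the ordering of Part (1).

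For Part (2), I would follow the single-parameter argument verbatim on each layer: if the block of values below $\lambda_k^*$ contains no strict inequality at its top (equivalently, if all $a^k_{i_j}(\Theta^*)$ already equal $\lambda_k^*$, or if $l_k^*=d_k$), I take $l_k:=d_k$; otherwise I locate the largest index $m\in\mathbb{N}_{d_k-l_k^*}$ at which $a^k_{i_m}(\Theta^*)<\lambda_k^*=a^k_{i_{m+1}}(\Theta^*)$ and set $l_k:=d_k-m$, which automatically satisfies $l_k\geq l_k^*$ and turns the non-strict ordering of Part (1) into the strict form \eqref{multi Order-new}. For Part (3) I would run the converse direction inside each layer: the strict inequalities in \eqref{multi Order-new} force $a^k_{i_j}(\Theta^*)<\lambda_k^*$ for all $j\in\mathbb{N}_{d_k-l_k}$, and since a nonzero component would, by \eqref{multi lambda_general_smooth1}, require $a^k_i(\Theta^*)=\lambda_k^*$, each of these $d_k-l_k$ components must vanish; thus $w^{k*}$ has at most $l_k$ nonzero entries, i.e.\ $l_k^*\leq l_k$.

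Since each of the three parts is obtained by applying the already-proved single-parameter reasoning independently to each layer, I do not anticipate a genuine obstacle. The only point requiring care is the bookkeeping in the degenerate cases of Part (2)—namely $l_k^*=0$, $l_k^*=d_k$, or ties at the threshold $\lambda_k^*$—but these are handled exactly as the $l=d_W$ branch in the proof of Theorem \ref{theorem: single vector lambda relation-new}, so the per-layer decoupling is what carries the entire argument.
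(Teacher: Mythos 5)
Your proposal is correct and matches the paper's intended argument: the paper explicitly omits this proof, stating that it "may be proved by arguments similar to those used in the proof of Theorem \ref{theorem: single vector lambda relation-new}," and your layer-by-layer reduction via the decoupled conditions \eqref{multi lambda_general_smooth1} and \eqref{multi lambda_general_smooth11} of Theorem \ref{prop: multi choice_sparsity_general_smooth} is precisely that argument carried out in detail. The handling of the degenerate and tie cases in Part (2) also mirrors the single-parameter proof, so nothing is missing.
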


 Based on Theorem \ref{theorem: multi vector lambda relation-new}, we propose an iterative scheme for selecting regularization parameters $\lambda_k$,  $k\in\mathbb{N}_D$, such that the weight matrices of the resulting neural network achieve prescribed target sparsity levels (TSLs) $\{l_k^*\}_{k\in\mathbb{N}_D}$. This will be done by extending Algorithm \ref{algo: iterative scheme picking single lambda} to the multi-parameter case. Unlike Algorithm \ref{algo: iterative scheme picking single lambda}, where we choose the single regularization parameter that penalizes all weight parameters, in the iterative scheme for choosing the multiple regularization parameters, we choose different parameters for different layers. In fact, Theorem \ref{theorem: multi vector lambda relation-new} allows us to choose the regularization parameter $\lambda_k$ of the $k$-th layer according to the sequence $a^k_i$, $i\in\mathbb{N}_{d_k}$ in a way similar to that in Algorithm \ref{algo: iterative scheme picking single lambda}.
 
 Due to possible huge amount of trainable parameters and the interplay between multiple regularization parameters across different layers, we do not seek to precisely align the weights in neural networks with the  TSLs for each layer. Instead, we adjust our objective from attaining exact sparsity levels to finding neural networks whose  number of nonzero weights closely matches the TSLs within a given tolerance. We achieve this by extending the tolerance \eqref{layer stop single parameter} to the multi-parameter case. Let $l_k$  denote the total number of nonzero weight parameters in the $k$-th layer of a neural network. With a given tolerance $\epsilon>0$ and a threshold $K\in\mathbb{N}_D$, we say that the neural network achieves TSLs $\{l_k^*\}_{k\in\mathbb{N}_D}$ if 
\begin{equation*}\label{layer stop}
    \left|\left\{k\in\mathbb{N}_D:\left|l_k-l_k^*\right|/l_k^*\leq\epsilon\right\}\right|\geq K,
\end{equation*}
where $|A|$ refers to the cardinality of a finite set $A$. Here, $\epsilon$ and $K$ serve as parameters for stopping criteria, tailoring to specifics of individual learning tasks.

We summarize the iterative scheme for choosing the multiple regularization parameters in Algorithm \ref{algo: iterative scheme picking lambda}. Numerical experiments to be presented in Section \ref{section: numerical experiment} demonstrate the effectiveness of this algorithm in identifying the desired regularization parameters. 
\begin{algorithm}
  \caption{Iterative scheme selecting multiple regularization parameters for model \eqref{general multi lambda model}}
  
  \label{algo: iterative scheme picking lambda}
  
  \KwInput{$\mathcal{L}$, $\{l_k^*\}_{k\in\mathbb{N}_D}$, $\epsilon$, $K$}
  
  \KwInitialization{Choose $\{\lambda_k^1\}_{k\in\mathbb{N}_D}$ large enough that guarantees $l_k^1<l_k^*$ for all $k\in\mathbb{N}_D$. Choose $\{\lambda_k^2\}_{k\in\mathbb{N}_D}$ small enough that guarantees $l_k^2>l_k^*$ for all $k\in\mathbb{N}_D$.}
  \For{$i = 2,3,\ldots$}
  {Solve \eqref{general multi lambda model} with $\{\lambda_k^i\}_{k\in\mathbb{N}_D}$ and get the corresponding numerical solution $\Theta^i=((w^i)^k,(b^i)^k)_{k\in\mathbb{N}_D}$.\\
  Let $l_k^{i}$ be the sparsity level of $(w^i)^k$ for each $k\in\mathbb{N}_D$.\\
  \If{$\left|\left\{k\in\mathbb{N}_D:|l_k^i-l_k^*|/l_k^*\leq\epsilon\right\}\right|\geq K$} {
    \textbf{break}  %
  }
  
  \For{$k = 1,2,\ldots,D$}
  {
  Compute $(a^i)^k:=\left|\nabla_{(w^i)^k}\mathcal{L}(\Theta^i)\right|$.\\
  Obtain $j_1$ such that $l_k^{j_{1}}=\max\{l_k^j: l_k^j\leq l_k^*, j\in\mathbb{N}_i\}$.\\
  Obtain $j_2$ such that 
  $l_k^{j_{2}}=\min\{l_k^j: l_k^*\leq l_k^j, j\in\mathbb{N}_i\}$.\\

  \uIf{\text{there exists $j$ such that} $\lambda_k^{j_2}< (a^i)^k_j< \lambda_k^{j_1}$}{
  Update $\lambda_k^{i+1}$ as the median of $\left\{(a^i)^k_j:\lambda_k^{j_2}<(a^i)^k_j< \lambda_k^{j_1}\right\}$.}
  \Else{Update $\lambda_k^{i+1}$ as $(\lambda_k^{j_2}+\lambda_k^{j_1})/2$.}
  
  }
  }
  
  \KwOutput{$\Theta^i$, $\{\lambda_k^i\}_{k\in\mathbb{N}_D}$.}  
\end{algorithm}

\section{Proximal Gradient Descent Algorithm}\label{ImplementationIssues}
When implementing Algorithms \ref{algo: iterative scheme picking single lambda} and \ref{algo: iterative scheme picking lambda}, we need to solve optimization problems \eqref{general single lambda model} and  \eqref{general multi lambda model} repeatedly. These problems are non-convex and have non-differentiable regularization terms. Having an efficient algorithm for solving these optimization problems is essential for the success of implementing Algorithms \ref{algo: iterative scheme picking single lambda} and \ref{algo: iterative scheme picking lambda}. We describe in this section a mini-batch proximal gradient descent algorithm for solving these optimization problems. Proximal algorithms for solving convex non-differential optimization problems were developed in \cite{Li-Shen-Xu-Zhang:AiCM:15,Micchelli_2011}.

We recall the definition of proximity operators. Suppose that $f:\mathbb{R}^n\to \overline{\mathbb{R}}:= \mathbb{R}\cup\{+\infty\}$ is a convex function, with $\mathrm{dom}(f):=\{x\in\mathbb{R}^n:f(x)<+\infty\}\neq{\emptyset}.$ The proximity operator $\text{prox}_{f}:\mathbb{R}^n\to\mathbb{R}^n$ of $f$ is defined by
$$
\text{prox}_{f}(x):=\argmin\left\{\frac{1}{2}\|u-x\|_2^2+f(u):u\in\mathbb{R}^n\right\}\ \ \mbox{for}\ x\in\mathbb{R}^n. 
$$
Particularly, when $f$ is chosen by the $\ell_1$-norm  multiplied by a positive constant coefficient $\beta$, namely $f(x)=\beta\|x\|_1$, $x\in\mathbb{R}^n$, then for each $x=[x_i:j\in\mathbb{N}_n]\in\mathbb{R}^n$, the proximity operator of $f$ is given by $\mathrm{prox}_{\beta\|\cdot\|_1}(x)=[u_i:i\in\mathbb{N}_n]\in\mathbb{R}^n$ with 
\begin{equation*}
    u_i:=\mathrm{sign}(x_i)\max\{|x_i|-\beta,0\},
\end{equation*}
where sign denotes the sign function, assigning $-1$ for a negative input and $1$ for a non-negative input. This closed-form formula for $\mathrm{prox}_{\beta\|\cdot\|_1}$ facilitates fast solution of the optimization problem \eqref{general multi lambda model}. 

The mini-patch proximal gradient descent algorithm is a useful approach to solving problem \eqref{general multi lambda model} with regularization parameters $\{\lambda_k\}_{k\in\mathbb{N}_D}$. To describe the algorithm, we specify several parameters such as the learning rate $\alpha$, epoch number ``$E$", and mini-batch size ``$B$". During each epoch, we randomly shuffle the training dataset and divide  it into several mini-batches of a given mini-batch size, except that the last mini-batch may have less size. Suppose that there are $I$ mini-batches, then for each $i\in\mathbb{N}_I$, the $i$-th mini-batch and its size are denoted by $G_i$ and $|G_i|$, respectively. For each $i\in\mathbb{N}_I$, we denote by $\mathcal{L}_{G_i}:\mathbb{R}^t\to\mathbb{R}$ the loss function associated with the mini-batch $G_i$. Then, for each mini-batch, we update layer-wisely the bias parameters through gradient descent and the weight parameters through proximal gradient descent. We check if the resulting neural network satisfies the desirable condition after the update of each mini-batch while running Algorithm \ref{algo: iterative scheme picking lambda}. We present the multi-parameter mini-batch proximal gradient descent algorithm in Algorithm \ref{algo: Mini-batch proximal gradient descent algorithm}.
\begin{algorithm}
  \caption{Multi-parameter mini-batch proximal gradient descent algorithm}
  
  \label{algo: Mini-batch proximal gradient descent algorithm}
  
  \KwInput{$\{\lambda_k\}_{k\in\mathbb{N}_D}$, $\alpha$, $E$, $B$}

  \For{$e = 1, 2,\ldots,E$}{
      Randomly shuffle the training dataset and obtain mini-batches $G_i$, $i\in\mathbb{N}_I$. 
      
      \For{$i = 1, 2,\ldots, I$}{
      
        \For{$k = 1, 2,\ldots, D$}{
        $w^k\leftarrow \mathrm{prox}_{\alpha\lambda_k\|\cdot\|_1}\left(w^k-\alpha\cdot\frac{1}{|G_i|}\nabla_{w^k}\mathcal{L}_{G_i}(\Theta)\right)$\\
        $b^k\leftarrow b^k-\alpha\cdot\frac{1}{|G_i|}\nabla_{b^k}\mathcal{L}_{G_i}(\Theta)$
        
        }
      }

  }
    
\end{algorithm}

\section{Numerical Experiments}\label{section: numerical experiment}
In this section, we present numerical results to validate  the parameter choice strategies established in Sections \ref{section: parameter choice for single, sparsity and accuracy} and \ref{section: multi parameter model}. Application problems studied in this section include regression and classification.

All experiments presented in this section are performed on the First Gen ODU HPC Cluster, and implemented with Pytorch \cite{paszke2019pytorch}. The computing jobs are randomly placed on an X86\_64 server with the computer nodes Intel(R) Xeon(R) CPU E5-2660 0 @ 2.20GHz (16 slots), Intel(R) Xeon(R) CPU E5-2660 v2 @ 2.20GHz (20 slots), Intel(R) Xeon(R) CPU E5-2670 v2 @ 2.50GHz (20 slots), Intel(R) Xeon(R) CPU E5-2683 v4 @ 2.10GHz (32 slots).

In all the numerical examples presented in this section, we choose the rectified linear unit (ReLU) function 
\begin{equation*}\label{relu}
    \mathrm{ReLU}(x):=\max\{0,x\},\ x\in\mathbb{R},
\end{equation*}
as the activation function for each layer of a neural network. Note that ReLU is not differentiable at $x=0$ and in the numerical experiments while computing gradients in Algorithms \ref{algo: iterative scheme picking single lambda} and \ref{algo: iterative scheme picking lambda}, we simply set its derivative at $x=0$ to be $0$. 

In presentation of the numerical results, we use ``TSL" and ``TSLs" to represent the target sparsity level $l^*$ in Algorithm \ref{algo: iterative scheme picking single lambda} and layer-wise target sparsity level $\{l_k^*\}_{k\in\mathbb{N}_D}$ in Algorithm \ref{algo: iterative scheme picking lambda}, respectively. We use ``SL'' to denote the sparsity level of the weight matrices of the neural network obtained from Algorithm \ref{algo: iterative scheme picking single lambda} and ``SLs'' to denote the layer-wise sparsity level of the weight matrices of the neural network obtained from Algorithm \ref{algo: iterative scheme picking lambda}. By ``Ratio'', we denote the ratio of the number of the nonzero weight parameters to the total number of weight parameters in the obtained neural network. We also use ``NUM'' to denote the number of iterations for obtaining the desired regularization parameter whose corresponding solution aligns with the stopping criteria when running Algorithms \ref{algo: iterative scheme picking single lambda} and \ref{algo: iterative scheme picking lambda}.

\subsection{Regression}
In this subsection, we consider parameter choice for regression problems: the single-parameter choice of model \eqref{single lambda regression model} implemented by Algorithm \ref{algo: iterative scheme picking single lambda} and the multi-parameter choice of model \eqref{multi lambda model regression} implemented by Algorithm \ref{algo: iterative scheme picking lambda}. 
For the comparison purpose, we also consider the single-parameter $\ell_2$ regularization model for regression and take the resulting approximation accuracy as a reference to compare with that obtained by Algorithms \ref{algo: iterative scheme picking single lambda} and \ref{algo: iterative scheme picking lambda}. 

We first describe the neural network used in this experiment. We choose fully connected neural network with $3$ hidden layers (totally $4$ layers) as the neural network architecture in models \eqref{single lambda regression model} and \eqref{multi lambda model regression}. The width of each hidden layer of the neural network is specified by $128$, $128$ and $64$, respectively. In this neural network, the number of weight parameters for each layer is $
[d_k:k\in\mathbb{N}_4]=[768, 16384, 8192, 64]
$ and the total number of the weight parameters is $d_W=25408.$ 

We test the proposed algorithms by using the benchmark dataset ``Mg"  \cite{chang2011libsvm,flake2002}. The dataset is composed of time series data generated by the Mackey-Glass system, which contains $1385$ instances, each with  $6$ features. The $6$ features represent lagged observations of the Mackey-Glass time series, each capturing a past value at a specific time delay. The regression task is to predict the next value of the Mackey-Glass time series based on the $6$ lagged observations. We take $N:=1000$ instances as training samples and $N':=385$ instances as testing samples. For given training samples $\left\{\left(x^i,y^i\right)\in\mathbb{R}^6\times\mathbb{R}: i\in\mathbb{N}_N\right\}$, we define the mean square error (MSE) by
\begin{equation}\label{MSE}
    \mathrm{MSE}:=\frac{1}{N}\sum_{i\in\mathbb{N}_N} \left(\mathcal{N}_{\Theta}(x^i)-y^i\right)^2.
\end{equation} 
Moreover, we denote by $\mathrm{TrMSE}$ and $\mathrm{TeMSE}$ the $\mathrm{MSE}$ on the training dataset and the testing dataset, respectively. $\mathrm{TeMSE}$ is computed by using \eqref{MSE} with $N$ replaced by $N'$ and the training samples replaced by the test samples.

For the comparison purpose, we train the neural network by the single-parameter $\ell_2$ regularization model
\begin{equation}\label{classical DNN model}
    \min\left\{\sum_{i\in\mathbb{N}_N}\left\|\mathcal{N}_{\Theta}(x^i)-y^i\right\|_2^2+\lambda\|w\|_2^2:\Theta:=(w,b)\in\mathbb{R}^t\right\}.
\end{equation}
We choose the parameter $\lambda$ as $0$, $10^{-5}$, $10^{-4}$, $10^{-3}$, $10^{-2}$, and solve model \eqref{classical DNN model} with each of the regularization parameters. We list the numerical results in Table \ref{table: regression result L2}, from which we see that all the neural networks obtained are dense, and the best TrMSE and TeMSE values are $0.96\times 10^{-2}$ and $1.44\times 10^{-2}$, respectively, when $\lambda=10^{-4}$. We will take this  approximation accuracy as a reference for the neural network obtained by Algorithms \ref{algo: iterative scheme picking single lambda} and \ref{algo: iterative scheme picking lambda}. 

\begin{table}
\captionsetup{justification=centering}
\caption{\label{table: regression result L2}The single-parameter $\ell_2$ regularization model for regression}
\vspace{-0.5cm}
\begin{center}
\begin{tabular}{cccccc}
\hline\hline
$\lambda$ & $0$      & $10^{-5}$ & $10^{-4}$ & $10^{-3}$ & $10^{-2}$ \\ \hline
TrMSE   & $0.90\times10^{-2}$ & $0.92\times10^{-2}$  & $\mathbf{0.96\times10^{-2}}$  & $1.27\times10^{-2}$  & $21.21\times10^{-2}$  \\ 
TeMSE  & $1.53\times10^{-2}$ & $1.52\times10^{-2}$  & $\mathbf{1.44\times10^{-2}}$  & $1.44\times10^{-2}$  & $2.02\times10^{-2}$  \\ 
Ratio     & $100\%$ & $100\%$        & $\mathbf{100\%}$        & $100\%$  & $100\%$    \\ \hline\hline
\end{tabular}
\end{center}
\end{table}

We employ Algorithm \ref{algo: iterative scheme picking single lambda} to select a single regularization parameter that ensures the resulting neural network achieves a given TSL. For two prescribed TSL  values $2000$ and $5000$, we apply Algorithm \ref{algo: iterative scheme picking single lambda} with $\epsilon=0.1\%$ and initialize $\lambda^1=10^{-3}$, $\lambda^2=10^{-7}$ to find desired regularization parameters $\lambda^*$ and a corresponding local minimizer $\Theta^{*}:=\left(w^{k*},b^{k*}\right)_{k\in\mathbb{N}_4}$ of  model \eqref{single lambda regression model}. When running Algorithm \ref{algo: iterative scheme picking single lambda}, we solve \eqref{general single lambda model} by employing Algorithm \ref{algo: Mini-batch proximal gradient descent algorithm} with epoch $E=50000$, learning rate $\alpha=0.1$, and the full batch size. Numerical results for this experiment are reported in Table \ref{table: regression result single L1}. For the two TSL values, the algorithm reaches the stopping criteria within $5$ and $4$ iterations, respectively. The sparsity level SL of the neural network obtained by Algorithm \ref{algo: iterative scheme picking single lambda} 
coincide with the TSL within  tolerance error $\epsilon=0.1\%$.
The resulting sparse DNNs with only $7.88\%$ and $19.69\%$ of nonzero weight parameters achieve an approximation accuracy comparable to those shown in Table \ref{table: regression result L2}, with even smaller TeMSE values.

\begin{table} 
    \captionsetup{justification=centering}
    \caption{Single-parameter choices (Algorithm \ref{algo: iterative scheme picking single lambda}) for model  \eqref{single lambda regression model}}
    \label{table: regression result single L1}
    \vspace{-0.5cm}
    \begin{center}
    \begin{tabular}{ccc}
    \hline\hline
    TSL        & $2000$              & $5000$              \\ \hline
    $\lambda^*$ & $7.24\times 10^{-5}$ & $3.87\times10^{-5}$ \\ 
    SL       & $2001$              & $5002$              \\ 
    Ratio       & $7.88\%$                            & $19.69\%$                            \\ 
    NUM         & $5$                                 & $4$                                  \\ 
    TrMSE       & $1.26\times10^{-2}$                             & $1.18\times10^{-2}$                             \\ 
    TeMSE   & $1.43\times10^{-2}$                             & $1.40\times10^{-2}$                             \\ \hline\hline
    \end{tabular}
    \end{center}
\end{table}

\begin{table} 
    \captionsetup{justification=centering}
    \caption{\label{table: regression result multi L1}Multi-parameter choices (Algorithm \ref{algo: iterative scheme picking lambda}) for model \eqref{multi lambda model regression}}
    \vspace{-0.5cm}
    \begin{center}
    \begin{tabular}{ccc}
    \hline\hline
    TSLs        & $[650, 1200, 1000, 45]$              & $[700, 3000, 1500, 50]$              \\  \hline
    $\{\lambda_k^*\}_{k\in\mathbb{N}_4}$ & $[4.19,5.80,1.87,2.70]\times10^{-5}$ & $[1.66,3.34,2.37,1.72]\times10^{-5}$ \\ 
    SLs        & $[553, 1258, 1036, 43]$              & $[713, 3730, 1538, 52]$              \\ 
    Ratio       & $11.37\%$                            & $23.74\%$                            \\ 
    NUM         & $10$                                 & $4$                                  \\ 
    TrMSE       & $1.14\times10^{-2}$                             & $1.12\times10^{-2}$                             \\ 
    TeMSE   & $1.42\times10^{-2}$                             & $1.39\times10^{-2}$                             \\ \hline\hline
    \end{tabular}
    \end{center}
\end{table}

We then employ Algorithm \ref{algo: iterative scheme picking lambda} to choose multiple regularization parameters with which the resulting neural network achieves given TSLs. For two prescribed TSLs values $[650,1200,1000,45]$ and $[700,3000,1500,50]$, we apply Algorithm \ref{algo: iterative scheme picking lambda} with assigning $\epsilon=5\%$, $K=3$ and initialize $\lambda^1=10^{-3}$, $\lambda^2=10^{-7}$ to find desired regularization parameters $\{\lambda_k^*\}_{k\in\mathbb{N}_4}$ and a corresponding local minimizer $\Theta^{*}:=\left(w^{k*},b^{k*}\right)_{k\in\mathbb{N}_4}$ of the multi-parameter $\ell_1$ regularization model \eqref{multi lambda model regression}. When running Algorithm \ref{algo: iterative scheme picking lambda}, we solve \eqref{general multi lambda model} by employing Algorithm \ref{algo: Mini-batch proximal gradient descent algorithm} with epoch $E=50000$, learning rate $\alpha=0.1$, and the full batch size. We report the numerical results of this experiment in Table \ref{table: regression result multi L1}. 
For the two TSLs values, the algorithm reaches the stopping criteria within $10$ and $4$ iterations, respectively. The layer-wise sparsity level SLs of the neural network obtained by Algorithm \ref{algo: iterative scheme picking lambda}
match with the TSLs within  tolerance error $\epsilon=5\%$. The resulting sparse DNNs have $11.37\%$ and $23.74\%$ nonzero weight parameters. The numerical results in Table \ref{table: regression result multi L1} demonstrates the efficacy of Algorithm \ref{algo: iterative scheme picking lambda} in selecting regularization parameters to achieve a desired sparsity level in DNNs, with preserving the approximation accuracy shown in Table \ref{table: regression result L2} for the dense DNN.

\subsection{Classification}
We consider in this subsection the parameter choice strategy for classification problems:  the single-parameter choice of model \eqref{single lambda classification model} implemented by Algorithm \ref{algo: iterative scheme picking single lambda} and the multi-parameter choice of model \eqref{multi lambda model classification} implemented by Algorithm \ref{algo: iterative scheme picking lambda}. The single-parameter $\ell_2$ regularization model for classification is also considered as for the comparison purpose.

We begin with describing the convolutional neural network (CNN) used in this experiment. We construct a CNN comprising three convolutional layers (Conv-1, Conv-2, Conv-3) and two fully connected layers (FC-1, FC-2). The filter mask in all three convolutional layers is set to be size $3\times 3$ and stride $1$ with padding $1$. Each convolutional layer is succeeded  by a maximum pooling layer (MaxPool) and activated using the ReLU function. The maximum pooling window is set to be size $2\times2$ and stride $2$. The architectural details of the CNN are outlined in Table \eqref{table: CNN}. %
The number of weight parameters for convolutional and fully connected layers in this CNN architecture is $
[d_k:k\in\mathbb{N}_5]=[288, 18432, 73728, 589824, 5120]$, and the total number of the weight parameters is $d_W=687392$. 

\begin{table}
\captionsetup{justification=centering}
\caption{\label{table: CNN}Structure of CNN}
\vspace{-0.5cm}
\begin{center}
\begin{tabular}{cc}
\hline\hline
Layers     & Output size of each layer     \\ \hline
$\text{Input}\in\mathbb{R}^{28\times28}$   &     \\ 
$\text{Conv-1}:   1\times(3\times3), 32$   & $32\times(28\times28)$        \\ 
MaxPool: $2\times2$,   stride=$2$          & $32\times(14\times14)$ (ReLU) \\ 
$\text{Conv-2}:   32\times(3\times3), 64$  & $64\times(14\times14)$        \\ 
MaxPool: $2\times2$,   stride=$2$          & $64\times(7\times7)$ (ReLU)   \\ 
$\text{Conv-3}:   64\times(3\times3), 128$ & $128\times(7\times7)$         \\ 
MaxPool, $2\times2$,   stride=$2$          & $128\times(3\times3)$ (ReLU)  \\ 
Flatten                                    & $1152$                        \\ 
FC-1: $1152\times 512$                     & $512$ (ReLU)                  \\ 
FC-2: $512\times 10$                       & $10$ (Softmax)                \\ 
$\text{Output}\in[0,1]^{10}$          &                               \\ \hline\hline
\end{tabular}
\end{center}
\end{table}

We employ the CNN to classify image samples in MNIST database \cite{lecun1998gradient}. This dataset comprises $60000$ samples for training  and $10000$ for testing. Each sample is a grayscale image of size $28\times 28$ pixels depicting handwritten digits from ``0'' to ``9''. We evaluate the model's accuracy by counting labels correctly predicted for both the training and testing datasets, denoted by  $\mathrm{TrA}$ and $\mathrm{TeA}$, respectively.

To facilitate comparison, we train the neural network using the single-parameter $\ell_2$ regularization model
\begin{equation}\label{l2 classification DNN model}
    \min\left\{-\sum_{i\in\mathbb{N}_N} \log\left(\left(\mathcal{S}(\mathcal{N}_{\Theta}(x^i))\right)_{y^i}\right)+\lambda\|w\|_2^2:\Theta:=(w,b)\in\mathbb{R}^t\right\}.
\end{equation}
We experiment with various values of the regularization parameter $\lambda$: $0$, $10^{-5}$, $10^{-4}$, $10^{-3}$, $10^{-2}$, and summarize the numerical results in Table \ref{table: classification result L2}. As shown in Table \ref{table: classification result L2}, the neural networks obtained from the single-parameter $\ell_2$ regularization model \eqref{l2 classification DNN model} are dense. Among the tested values, $\lambda=10^{-4}$ yields the neural network with the best outcome. We will use this level of accuracy as a reference for evaluating the neural networks generated by Algorithms \ref{algo: iterative scheme picking single lambda} and \ref{algo: iterative scheme picking lambda}.

\begin{table}
\captionsetup{justification=centering}
\caption{\label{table: classification result L2}The single-parameter $\ell_2$ regularization model for classification}
\vspace{-0.5cm}
\begin{center}
\begin{tabular}{cccccc}
\hline\hline
$\lambda$ & $0$       & $10^{-5}$ & $10^{-4}$ & $10^{-3}$ & $10^{-2}$ \\ \hline
TrA       & $100\%$   &  $100\%$  & $\mathbf{100\%}$   & $99.63\%$ & $98.05\%$ \\ 
TeA       & $99.30\%$ &   $99.28\%$  & $\mathbf{99.38\%}$ & $99.26\%$ & $98.07\%$ \\
Ratio     & $100\%$ &   $100\%$  & $\mathbf{100\%}$ & $100\%$ & $100\%$ \\ \hline\hline
\end{tabular}
\end{center}
\end{table}

We implement Algorithm \ref{algo: iterative scheme picking single lambda} to select the regularization parameter with which the
resulting neural network achieves a given TSL. For two prescribed  TSL values $50000$ and $100000$, we apply Algorithm \ref{algo: iterative scheme picking single lambda} with $\epsilon=0.1\%$ and initialize $\lambda^1=10^{-2}$, $\lambda^2=10^{-7}$ to find the desired regularization parameter $\lambda^*$ and a corresponding local minimizer $\Theta^{*}:=\left(w^{k*},b^{k*}\right)_{k\in\mathbb{N}_5}$. During running Algorithm \ref{algo: iterative scheme picking single lambda}, we solve \eqref{general single lambda model} using Algorithm \ref{algo: Mini-batch proximal gradient descent algorithm} with  epoch $E=200$, learning rate $\alpha=0.1$, and mini-batch size $B=128$. The numerical results are reported in Table \ref{table: classification result single L1}, from which we observe that Algorithm \ref{algo: iterative scheme picking single lambda} meets the stopping criteria after $6$ and $5$ iterations, for TSL being 50000 and 100000, respectively. The resulting SL in the table is close to the TSL. The results in Table \ref{table: classification result single L1} demonstrate the effectiveness of Algorithm \ref{algo: iterative scheme picking single lambda} for selecting the desired regularization parameter with preserving the accuracy shown in Table \ref{table: classification result L2}.

\begin{table}
    \captionsetup{justification=centering}
    \caption{\label{table: classification result single L1}Single-parameter choices (Algorithm \ref{algo: iterative scheme picking single lambda}) for model  \eqref{single lambda classification model}}
    \vspace{-0.5cm}
    \begin{center}
        \begin{tabular}{ccc}
            \hline\hline
            TSL        & $50000$       & $100000$      \\ \hline
            $\lambda^*$ & $6.82\times10^{-5}$ & $1.21\times10^{-5}$ \\ 
            SL  &  $50016$   &  $100029$  \\            
            Ratio       & $7.28\%$                                  & $14.55\%$                                 \\ 
            NUM         & $6$                                      &   $5$                                     \\ 
            TrA         & $99.81\%$                                   & $99.99\%$                                   \\ 
            TeA         & $99.21\%$                                 & $99.13\%$                                 \\ \hline\hline
            \end{tabular}
         \end{center}
\end{table}

Once again, we employ Algorithm \ref{algo: iterative scheme picking lambda} to select the layer-wise regularization parameters in the multi-parameter $\ell_1$ regularization model \eqref{multi lambda model classification}. For two prescribed  TSLs values $[250,8000,13000,20000,4200]$ and $[270,10000,20000,40000,4300]$, we apply Algorithm \ref{algo: iterative scheme picking lambda} with $\epsilon=10\%$, $K=5$ and initialize $\lambda^1=10^{-4}$, $\lambda^2=10^{-7}$ to find desired regularization parameters $\lambda_k^*$, $k\in\mathbb{N}_5$ and a corresponding local minimizer $\Theta^{*}:=\left(w^{k*},b^{k*}\right)_{k\in\mathbb{N}_5}$ of the multi-parameter $\ell_1$ regularization model \eqref{multi lambda model classification}. When running Algorithm \ref{algo: iterative scheme picking lambda}, we solve \eqref{general multi lambda model} using Algorithm \ref{algo: Mini-batch proximal gradient descent algorithm} with epoch $E=200$, learning rate $\alpha=0.1$, and mini-batch size $B=128$. We list the results for this experiment in Table \ref{table: classification result multi L1}, from which we see that Algorithm \ref{algo: iterative scheme picking lambda} meets the stopping criteria within $15$ and $9$ iterations for the two TSLs values, respectively. Additionally,  the number of non-zero weight parameters takes account $6.61\%$ and $11.38\%$ of the dense DNNs. This validates the efficacy of Algorithm \ref{algo: iterative scheme picking lambda} for obtaining  regularization parameters leading to DNNs with desired sparsity level and the accuracy
shown in Table \ref{table: classification result L2}. 

\begin{table}
    \captionsetup{justification=centering}
    \caption{\label{table: classification result multi L1}Multi-parameter choices (Algorithm \ref{algo: iterative scheme picking lambda}) for model \eqref{multi lambda model classification}}
    \vspace{-0.5cm}
    \begin{center}
        \begin{tabular}{ccc}
            \hline\hline
            TSLs        & $[250, 8000, 13000, 20000,   4200]$       & $[270, 10000, 20000, 40000,   4300]$      \\ \hline
            $\lambda^*$ & $[3.77,1.62,1.09,3.93,0.18]\times10^{-5}$ & $[3.82,1.43,9.65,2.21,0.18]\times10^{-5}$ \\ 
            SLs        & $[227, 7984, 14263, 18684, 4296]$         & $[243, 10374, 21964, 41125, 4489]$        \\ 
            Ratio       & $6.61\%$                                  & $11.38\%$                                 \\ 
            NUM         & $15$                                      & $9$                                       \\ 
            TrA         & $100\%$                                   & $100\%$                                   \\ 
            TeA         & $99.12\%$                                 & $99.35\%$                                 \\ \hline\hline
            \end{tabular}
         \end{center}
\end{table}

\section{Conclusions} \label{section: conclusion}

We have  derived sparse deep learning models with single and multiple regularization parameters from a statistical perspective. We have characterized the sparsity level of learned neural networks in terms of the choice of regularization parameters.  Based on the characterizations, we have developed iterative schemes to determine multiple regularization parameters such that the weight parameters in the corresponding deep neural network enjoy prescribed layer-wise sparsity levels. We have tested the proposed algorithms for both regression and classification problems. Numerical results have demonstrated the efficacy of the algorithms in choosing desired regularization parameters that lead to learned neural networks with weight parameters having a predetermined sparsity level and preserving approximation accuracy.

\bigskip
 
\noindent{\bf Acknowledgment:} L. Shen is supported in part by the US National
Science Foundation under grant DMS-2208385. R. Wang is supported in part by the Natural Science Foundation of China under grant 12171202. Y. Xu is supported in part by the US National Science Foundation under grant DMS-2208386, and by the US National Institutes of Health under grant R21CA263876.


\begin{thebibliography}{1}

\bibitem{Argyriou2009}
A.~Argyriou, C.~A. Micchelli, and M.~Pontil.
\newblock When is there a representer theorem? {Vector} versus matrix regularizers.
\newblock {\em Journal of Machine Learning Research}, 10:2507--2529, 2009.

\end{thebibliography}


\begin{thebibliography}{02}
\bibitem{alvarez2016learning}
Alvarez J M and Salzmann M 2016 Learning the number of neurons in deep networks {\it Advances in Neural Information Processing Systems} {\bf 29} 2270--8

%
%

\bibitem{ba2014deep}
Ba J and Caruana R 2014 Do deep nets really need to be deep? {\it Advances in Neural Information Processing Systems} {\bf 27} 2654--62

%
%

%
%

\bibitem{bayes1763lii}
Bayes T 1763 An essay towards solving a problem in the doctrine of chances {\it Philosophical Transactions of the Royal Society of London} {\bf 53} 370--418

%
%

\bibitem{blalock2020state}
Blalock D, Ortiz J J G, Frankle J and Guttag J 2020 What is the state of neural network pruning? {\it Proceedings of the 3th Machine Learning and Systems Conference} {\bf 2} 129--46

\bibitem{chang2011libsvm}
Chang C-C and Lin C-J 2011 Libsvm: a library for support vector machines {\it ACM Trans. Intell.
Syst. Technol.} {\bf 2} 1--27

\bibitem{collins2014memory}
Collins M D and Kohli P 2014 Memory bounded deep convolutional networks (arXiv:1412.1442)

\bibitem{dai2016noisy}
Dai D-Q, Shen L, Xu Y and Zhang N 2016 Noisy 1-bit compressive sensing: models and algorithms {\it Appl. Comput. Harmon. Anal.} {\bf 40} 1--32

\bibitem{fang2024}
Fang R, Xu Y and Yan M 2024 Inexact fixed-point proximity algorithm for the $\ell_0$ sparse regularization problem {\it J. Sci. Comput.} {\bf 100} 58
 
\bibitem{flake2002}
Flake G W and Lawrence S 2002 Efficient SVM regression training with SMO {\it Mach. Learn.} {\bf 46} 271--90.

\bibitem{goodfellow2016}
Goodfellow I, Bengio Y and Courville A 2016 {\it Deep Learning} (Cambridge: MIT Press) 
%

\bibitem{grasmair2008sparse}
Grasmair M, Haltmeier M and Scherzer O 2008 Sparse regularization with $l^q$ penalty term {\it Inverse Problems} {\bf 24} 055020

\bibitem{han2015learning}
Han S, Pool J, Tran J and Dally W 2015 Learning both weights and connections for efficient neural network {\it Advances in Neural Information Processing Systems} {\bf 28}  1135--43

\bibitem{hinton2015distilling}
Hinton G, Vinyals O and Dean J 2015 Distilling the knowledge in a neural network (arXiv:1503.02531)

\bibitem{hoefler2021future}
Hoefler T, Alistarh D A, Dryden N and Ben-Nun T 2021 The future of deep learning will be sparse {\it SIAM News} May 3 

\bibitem{hoefler2021sparsity}
Hoefler T, Alistarh D, Ben-Nun T, Dryden N and Peste A 2021 Sparsity in deep learning: Pruning and growth for efficient inference and training in neural networks {\it J. Mach. Learn. Res.} {\bf 241} 

\bibitem{krol2012preconditioned}
Krol A, Li S, Shen L and Xu Y 2012 Preconditioned alternating projection algorithms for maximum a posteriori ECT reconstruction {\it Inverse problems} {\bf 28} 115005

\bibitem{krogh1991simple}
Krogh A and Hertz J 1991 A simple weight decay can improve generalization {\it Advances in Neural Information Processing Systems} {\bf 4} 950--7

\bibitem{krizhevsky2012imagenet}
Krizhevsky A, Sutskever I and Hinton G E 2012 Imagenet classification with deep convolutional neural networks {\it Advances in Neural Information Processing Systems} {\bf 25} 1097-105

%
%

\bibitem{lecun1998gradient}
LeCun Y, Bottou L, Bengio Y and Haffner P 1998 Gradient-based learning applied to document recognition {\it Proc. IEEE} {\bf 86} 2278--324

\bibitem{Li-Shen-Xu-Zhang:AiCM:15}
Li Q, Shen L, Xu Y and Zhang N 2015 Multi-step fixed-point proximity algorithms for solving a class of optimization problems arising from image processing {\it Adv. Comput. Math.} {\bf 41} 387--422


\bibitem{liu2023parameter}
Liu Q, Wang R, Xu Y and Yan M 2023 Parameter choices for sparse regularization with the $\ell_1$ norm {\it Inverse Problems} {\bf 39} 025004

%
%

\bibitem{lu2013regularization}
Lu S and Pereverzev S V 2013 {\it Regularization Theory for Ill-Posed Problems. Selected Topics} (Berlin: de Gruyter)



%
%

\bibitem{ma2019transformed}
Ma R, Miao J, Niu L and Zhang P 2019 Transformed $\ell_1$ regularization for learning sparse deep neural networks {\it Neural Networks} {\bf 119} 286--98

\bibitem{Micchelli_2011}
Micchelli C A, Shen L and Xu Y 2011 Proximity algorithms for image models: denoising {\it Inverse Problems} {\bf 27} 045009

\bibitem{murphy2012machine}
Murphy K P 2012 {\it Machine Learning: A Probabilistic Perspective} (Cambridge: MIT Press)

%
%

%
%

\bibitem{paszke2019pytorch}
Paszke A, Gross S, Massa F, Lerer A, Bradbury J, Chanan G, Killeen T, Lin Z, Gimelshein N, Antiga L and others 2019 Pytorch: An imperative style, high-performance deep learning library {\it Advances in Neural Information Processing Systems} {\bf 32} 8024-35

\bibitem{polson2017deep}
Polson N G and Sokolov V 2017 Deep learning: a Bayesian perspective {\it Bayesian Anal.} {\bf 12} 1275--304    

\bibitem{rockafellar2009variational}
Rockafellar R T and Wets R J-B 1998 {\it Variational analysis} {\bf 317} (Berlin: Springer-Verlag)

\bibitem{rudner2023function}
Rudner T, Kapoor S, Qiu S and Wilson A 2023 Function-space regularization in neural networks: A probabilistic perspective {\it Proceedings of the 40th International Conference on Machine Learning} {\bf 202} 29275-90 

\bibitem{scardapane2017group}
Scardapane S, Comminiello D, Hussain A and Uncini A 2017 Group sparse regularization for deep neural networks {\it Neurocomputing} {\bf 241} 81--9

%
%

\bibitem{Shen-Suter-Tripp:JOTA:2019}
Shen L, Suter B and Tripp E 2019 Structured Sparsity Promoting Functions {\it J. Optim. Theory Appl.} {\bf 183} 386--421




\bibitem{stuart2010inverse}
Stuart A M 2010 Inverse problems: a Bayesian perspective {\it Acta Numer.} {\bf 19} 451--559

\bibitem{tikhonov1977solutions} Tikhonov A and Arsenin V 1977 {\it Solutions to Ill-Posed Problems} (New York: Wiley)

%
%

%
%

\bibitem{wen2016learning}
Wen W, Wu C, Wang Y, Chen Y and Li H 2016 Learning structured sparsity in deep neural networks {\it Advances in Neural Information Processing Systems} {\bf 29} 2074-82

\bibitem{xu2023multi}
Xu Y 2023 Multi-grade deep learning (arXiv:2302.00150)

\bibitem{xu2022sparse}
Xu Y 2023 Sparse regularization with the $\ell_0$ norm {\it Anal. Appl.} {\bf 21} 901--29

\bibitem{xu2023sal}
Xu Y 2023 Successive affine learning for deep neural networks (arXiv:2305.07996)

\bibitem{xu2022convergence}
Xu Y and Zhang H 2022 Convergence of deep convolutional neural networks {\it Neural Networks} {\bf 153} 553--63

\bibitem{xu2023sparse}
Xu Y and Zeng T 2023 Sparse deep neural network for nonlinear partial differential equations {\it Numer. Math. Theory Methods Appl.} {\bf 16} 58-78

\bibitem{yoon2017combined}
Yoon J and Hwang S J 2017 Combined group and exclusive sparsity for deep neural networks {\it Proceedings of the 34th International Conference on Machine Learning} {\bf 70} 3958--66 




%
%


\bibitem{zhou2016less}
Zhou H, Alvarez J M and Porikli F 2016 Less is more: Towards compact CNNs {\it European Conference on
Computer Vision, Springer} 662--77

%
%






\end{thebibliography}
\end{document}